\newtheorem{theorem}{Theorem}
\newtheorem{lemma}{Lemma}
\newtheorem{defn}{Definition}
\newtheorem{ex}{Example}
\newcommand{\targ}[3]{#1_{#2}^{(#3)}}
\def\R{\mathbb{R}}
\def\E{\mathbb{E}}
\def\D{\mathcal{D}}
\def\K{\mathcal{K}}
\def\O{\mathcal{O}}
\def\1{\mathbf{1}}
\DeclareMathOperator*{\argmax}{arg\,max}
\title{Improved Regret Bounds for Online Submodular Maximization}
\author{%
  Omid Sadeghi\\
  Department of Electrical and Computer Engineering\\
  University of Washington\\
  Seattle, WA 98195\\
  \texttt{omids@uw.edu}\\
   \And
   Prasanna Raut\\
   Department of Mechanical Engineering\\
   University of Washington\\
   Seattle, WA 98195\\
   \texttt{raut@uw.edu}\\
   \AND
   Maryam Fazel\\
  Department of Electrical and Computer Engineering\\
  University of Washington\\
  Seattle, WA 98195\\
  \texttt{mfazel@uw.edu}\\
}
\begin{document}

\maketitle

\begin{abstract}
  In this paper, we consider an online optimization problem over $T$ rounds where at each step $t\in[T]$, the algorithm chooses an action $x_t$ from the fixed convex and compact domain set $\mathcal{K}$. A utility function $f_t(\cdot)$ is then revealed and the algorithm receives the payoff $f_t(x_t)$. This problem has been previously studied under the assumption that the utilities are adversarially chosen monotone DR-submodular functions and $\mathcal{O}(\sqrt{T})$ regret bounds have been derived. We first characterize the class of strongly DR-submodular functions and then, we derive regret bounds for the following new online settings: $(1)$ $\{f_t\}_{t=1}^T$ are monotone strongly DR-submodular and chosen adversarially, $(2)$ $\{f_t\}_{t=1}^T$ are monotone submodular (while the average $\frac{1}{T}\sum_{t=1}^T f_t$ is strongly DR-submodular) and chosen by an adversary but they arrive in a uniformly random order, $(3)$ $\{f_t\}_{t=1}^T$ are drawn i.i.d. from some unknown distribution $f_t\sim \mathcal{D}$ where the expected function $f(\cdot)=\mathbb{E}_{f_t\sim\mathcal{D}}[f_t(\cdot)]$ is monotone DR-submodular. For $(1)$, we obtain the first logarithmic regret bounds. In terms of the second framework, we show that it is possible to obtain similar logarithmic bounds with high probability. Finally, for the i.i.d. model, we provide algorithms with $\tilde{\mathcal{O}}(\sqrt{T})$ stochastic regret bound, both in expectation and with high probability. Experimental results demonstrate that our algorithms outperform the previous techniques in the aforementioned three settings.
\end{abstract}

\section{Introduction}
Online Convex Optimization (OCO) is a well-studied framework for sequential prediction in face of uncertainty inherent in the arriving data \cite{10.5555/3041838.3041955,OPT-013,MAL-018,bubeck2011introduction}. OCO can be interpreted as a repeated game between a learner and an adversary in which at each round $t\in[T]$, the player chooses an action $x_t$ from a fixed convex and compact domain $\K$ and upon committing to this action, a convex loss function $f_t: \K \to \R$ is revealed and the learner incurs a loss of $f_t(x_t)$. In the non-stochastic adversary model, the sequence of loss functions $\{f_t\}_{t=1}^T$ is assumed to be selected by an adversary who knows the learner's algorithm (but not the potential randomness used for prediction). The goal is to choose $\{x_t\}_{t=1}^T$ such that the \emph{regret} of the learner defined as $R_T=\sum_{t=1}^T f_t(x_t)-\min_{x\in \K}\sum_{t=1}^Tf_t(x)$ is minimized. For this setting, several algorithms have been proposed to obtain the provably optimal $\O(\sqrt{T})$ regret bound \cite{orabona2019modern,OPT-013,MAL-018,bubeck2011introduction}. Similarly, for the case where $\{f_t\}_{t=1}^T$ is a sequence of monotone DR-submodular utility functions (defined later in Section \ref{dr}), \cite{NIPS2008_5751ec3e,golovin2014online,pmlr-v84-chen18f} obtained $\O(\sqrt{T})$ regret bounds against the $(1-\frac{1}{e})$-approximation to the best fixed decision in hindsight, i.e., regret being defined as $R_T=(1-\frac{1}{e})\max_{x\in \K}\sum_{t=1}^T f_t(x)-\sum_{t=1}^T f_t(x_t)$. If the sequence of loss functions is strongly convex, algorithms with logarithmic regret bounds have been introduced \cite{hazan2007logarithmic}. However, despite the fact that strongly DR-submodular functions were introduced by \cite{NIPS2008_5751ec3e,bian2020continuous}, a concrete characterization of such functions and similar logarithmic regret bounds for them is missing in the literature.\\
In some scenarios, despite the adversarial nature of the sequence of loss functions, the online input does not have a temporal structure and online data is streamed without particular order \cite{10.1007/978-3-642-15775-2_16,devanur2009adwords,doi:10.1137/1.9781611973730.93}. For instance, consider the problem of bidding in repeated auctions for online advertising in which at each round $t\in[T]$, an impression (e.g., online viewer) arrives and becomes available for sale through an auction. The advertiser needs to bid to win the auction and the corresponding impression. In this example, the viewers do not arrive in a particular order and therefore, their order of arrival could be assumed to be random.\\
Moreover, in many applications, the sequence of loss functions $\{f_t\}_{t=1}^T$ is not chosen arbitrarily (adversarially). For instance, in empirical risk minimization, the learner wants to minimize a loss function $f$ which is the expectation of empirical loss functions $f_t(\cdot)=f(\cdot;\omega_t)~\forall t\in[T]$, where $\omega_t$ is drawn i.i.d. from a fixed unknown distribution $\D$, i.e., $f(\cdot)=\E_{\omega \sim \D}[f(\cdot;\omega)]$. In this setting, we aim to minimize the \emph{stochastic regret} defined as $\text{SR}_T=\sum_{t=1}^T f(x_t)-T\min_{x\in \K}f(x)$ ($T(1-\frac{1}{e})\max_{x\in \K}f(x)-\sum_{t=1}^T f(x_t)$ in the DR-submodular setting).\\
In this paper, we fill the gaps in the online submodular maximization literature through studying online strongly DR-submodular maximization problems in the adversarial setting, and also online submodular problems against the aforementioned two stochastic adversary models.
\subsection{Preliminaries}\label{dr}
\textbf{Notations.} The set $\{1,2,\dots,T\}$ is denoted by $[T]$. For a vector $x\in \R^n$, we use $x_i$ to denote the $i$-th entry of $x$. The inner product of two vectors $x,y\in\mathbb{R}^n$ is denoted by either $\langle x, y \rangle$ or $x^T y$. Moreover, for two vectors $x,y\in \mathbb{R}^n$, we have $x\preceq y$ if $x_i \leq y_i$ holds for every $i\in[n]$. A function $f:\mathbb{R}^n \to \mathbb{R}$ is called monotone if for all $x,y$ such that $x\preceq y$, $f(x)\leq f(y)$ holds. The dual norm $\|\cdot\|_*$ of a norm $\|\cdot\|$ is defined as $\|y\|_*=\max_{x:\|x\|\leq 1}\langle y,x\rangle$. A differentiable function $f:\K \to \R$ is called $\beta$-Lipschitz with respect to $\|\cdot\|$ if for all $x,y\in \K$, we have $|f(y)-f(x)|\leq \beta \|y-x\|$, or equivalently, $\|\nabla f(x)\|_*\leq \beta$ holds. The diameter of a set $\K$ with respect to $\|\cdot\|$ is defined as $R=\max_{x,y\in \K}\|y-x\|$.\\
\textbf{Submodular functions \cite{pmlr-v54-bian17a}.} A continuous differentiable function $f:\K \rightarrow \mathbb{R}$, $\K\subset \mathbb{R}_+^n$, is called submodular if for all $x,y\in \K, x\succeq y$ and $i\in[n]$ such that $x_i=y_i$, we have $\nabla_i f(x) \leq \nabla_i f(y)$. For a twice differentiable function $f$, it is submodular if and only if the off-diagonal entries of its Hessian matrix $\nabla^2 f$ are non-positive.\\
\textbf{DR-submodular functions \cite{pmlr-v54-bian17a,sadeghi2020online}.} A differentiable function $f:\K \rightarrow \mathbb{R}$, $\K\subset \mathbb{R}_+^n$, is called DR-submodular if its gradient is an order-reversing mapping, i.e., for all $x,y$ such that $x\succeq y$, $\nabla f(x) \preceq \nabla f(y)$ holds. For a twice differentiable function $f$, it is DR-submodular if and only if its Hessian matrix $\nabla^2 f$ is entry-wise non-positive. While DR-submodularity and concavity are equivalent for the special case of $n=1$, DR-submodular functions are generally non-concave. Nonetheless, an important consequence of DR-submodularity is concavity along non-negative directions \cite{pmlr-v54-bian17a,doi:10.1137/080733991}, i.e., for all $x,y$ such that $x\preceq y$, we have $f(y)\leq f(x)+\langle \nabla f(x),y-x\rangle$. It is easy to see that the class of DR-submodular functions are a subset of submodular functions and in fact, submodularity along with coordinate-wise concavity is equivalent to DR-submodularity \cite{pmlr-v54-bian17a}.\\
There are many functions which satisfy the DR-submodularity property including:\\
$\bullet$ \textbf{Indefinite quadratic functions.} Let $f(x)=\frac{1}{2}x^T Ax+a^T x+c$ where $A$ is a symmetric matrix. If $A$ is entry-wise non-positive, $f$ is a DR-submodular function. Such quadratic utility functions have a wide range of applications. In particular, price optimization with continuous prices \cite{NIPS2016_6301} and computing stability number of graphs \cite{motzkin_straus_1965} are both non-concave DR-submodular quadratic optimization problems.\\
$\bullet$ \textbf{Concave functions with negative dependence.} Let $d\geq 2$. If $h_i:\R_+ \to \R$ is concave for all $i\in[n]$ and $\theta_{i_1,\dots,i_r}\leq0$ for all $r\in[d]$ and $(i_1,\dots,i_r)\subseteq [n]$, the following function $f:\R_+^n \to \R$ is DR-submodular:
\begin{equation*}
f(x)=\sum_{i=1}^n h_i (x_i)+\sum_{(i,j):i\neq j}\theta_{ij}x_i x_j+\dots+\sum_{(i_1,\dots,i_d):i_r\neq i_s~\forall r,s\in[d]}\theta_{i_1,\dots,i_d}x_{i_1}\dots x_{i_d}.
\end{equation*}
See \cite{bian2020continuous,sadeghi2020single} for more examples and applications of continuous submodular functions.\\
For a DR-submodular function $f$, $f$ is $L$-smooth over non-negative directions with respect to $\|\cdot\|$ if for all $x,y$ such that $x\preceq y$, we have $f(y)-f(x)\geq \langle \nabla f(x), y-x\rangle -\frac{L}{2}\|y-x\|^2$.
\begin{ex}
    Let $f(x)=\frac{1}{2}x^TAx+a^Tx+c$ where $A$ is symmetric and entry-wise non-positive. If for all $i,j\in[n]$, $A_{ij}\geq -L$ holds, the function $f$ is $L$-smooth with respect to $\|\cdot\|_1$.
\end{ex}
\subsection{Related work}
\textbf{Online optimization in the i.i.d. model.} For the setting where the objective functions $\{f_t\}_{t=1}^T$ are drawn i.i.d. from an unknown distribution $\D$ with a convex mean $f(\cdot)=\E_{\D}[f_t(\cdot)]$, several algorithms have been proposed that could be divided into two categories of projection-based and projection-free algorithms where the latter is most relevant to our work. In particular, \cite{10.5555/3042573.3042808} proposed the Online Frank-Wolfe (OFW) algorithm which achieves a nearly optimal $\tilde{\O}(\sqrt{T})$ stochastic regret bound with high probability. The OFW algorithm requires to access exact gradient of $\{f_t\}_{t=1}^T$ and has an average $\O(T)$ per-iteration computational cost. To remedy this issue, more recently, \cite{pmlr-v80-chen18c} proposed the One-Shot Frank-Wolfe (OSFW) algorithm with $\O(T^{2/3})$ stochastic regret bound in expectation. Note that the OSFW algorithm obtains similar bounds for the setting where $f$ is monotone continuous DR-submodular. The OSFW algorithm only uses unbiased stochastic gradient estimates of loss functions and has an $\O(1)$ cost per iteration. However, the derived $\O(T^{2/3})$ stochastic regret bound is sub-optimal. To bridge this gap, \cite{xie2020efficient} introduced the Online stochastic Recursive Gradient-based Frank-Wolfe (ORGFW) algorithm which not only has a nearly optimal $\tilde{O}(\sqrt{T})$ stochastic regret bound with high probability, but it also maintains a low $\O(1)$ computational cost per round.\\
\textbf{Stochastic optimization.} Online optimization in the i.i.d. model is closely related but different from the stochastic optimization problem \cite{birge1997springer}. In online optimization in the i.i.d. model, the goal is to choose a \emph{sequence} $\{x_t\}_{t=1}^T$ of decision variables that has low stochastic regret and the algorithm requires to update the decision variables as soon as new data arrives online \cite{pmlr-v80-chen18c}. In contrast, stochastic optimization focuses on the quality of the \emph{final} output of the algorithm \cite{pmlr-v84-mokhtari18a} and aims to find an approximate optimal point of the underlying objective function, where the performance is measured by the convergence rate.\\
\textbf{Online convex optimization in the random order model.} \cite{icml2020_4841} studied OCO in a weaker model where the loss functions are still chosen adversarially but their order of arrival is random. This model is termed Random Order Online Convex Optimization (ROOCO). ROOCO is a natural middle ground between the standard adversarial OCO setting and the model where the loss functions are i.i.d. drawn from some unknown underlying distribution. In \cite{icml2020_4841}, the authors assumed that the average of the sequence of loss functions is $\alpha$-strongly convex while each individual loss function may not even be convex. They showed that if all the loss functions are quadratic, with probability at least $1-\delta$, Online Gradient Descent (OGD) with step size $\eta_t=\frac{1}{\alpha t}~\forall t\in[T]$ obtains a regret bound of $\O(\frac{\beta^2}{\alpha^3} {\rm log}^2 T)$, where $\beta$ is the Lipschitz constant of the loss functions. Moreover, for general loss functions, OGD with the same choice of step sizes obtains a regret bound of $\O(\frac{n\beta^2}{\alpha^3} {\rm log}^2 T)$ ($n$ is the dimension of the domain space) with probability at least $1-\delta$. They also provided two hardness results for the standard adversarial OCO model: First, they showed that there exists a sequence of (not necessarily convex) quadratic loss functions where the cumulative loss is strongly convex for which OGD with arbitrary step sizes suffers a linear regret with a non-zero probability. Also, they proved that there exists a sequence of convex loss functions with strongly convex cumulative loss where OGD with arbitrary step sizes suffers an $\Omega (\sqrt{T})$ regret. These two results verify that the random order assumption is crucial for obtaining logarithmic regret bounds.\\
\textbf{Offline DR-submodular maximization.} \cite{pmlr-v54-bian17a} proposed a variant of the Frank-Wolfe algorithm for maximizing a monotone DR-submodular function $f$ subject to a convex domain $\K$. Starting from $x^{(1)}=0$, the algorithm performs $K$ Frank-Wolfe updates where at each iteration $k\in[K]$, it finds $v_k$ such that $v_k=\argmax_{x\in \K}\langle x,\nabla f(x^{(k)})\rangle$, and performs the update $x^{(k+1)}=x^{(k)}+\frac{1}{K}v_k$. \cite{pmlr-v54-bian17a} showed that the output of this algorithm ($x^{(K+1)}$) obtains the provably optimal approximation ratio of $1-\frac{1}{e}$.
\begin{table}[t]\centering
	\begin{tabular}{c|c|c|c|c}
		Paper & Setting & Approx. Ratio & Regret & Guarantee\\ \hline
		\cite{pmlr-v84-chen18f} & adversarial, DR-submodular & $1-1/e$ & $\O(\sqrt{T})$ & deterministic\\ \hline
		Algorithm \ref{alg1} & adversarial, strongly DR-submodular & $1-1/e$ & $\O(\ln T)$ & deterministic\\ \hline
		Algorithm \ref{alg1} & random order, submodular$^{(a)}$ & $1-1/e$ & $\O(\ln^2 T)$ & w.h.p.\\ \hline
		\cite{pmlr-v80-chen18c} & i.i.d., DR-submodular & $1/e$ & $\O(T^{2/3})$ & in expectation\\ \hline
		Algorithm \ref{alg:exp4} & i.i.d., DR-submodular & $1-1/e$ & $\O(\sqrt{T})$ & in expectation, w.h.p.\\ \hline
		Algorithm \ref{alg:exp5} & i.i.d., DR-submodular & $1/e$ & $\O(\sqrt{T})$ & in expectation, w.h.p.\\ \hline
	\end{tabular}
	\caption{Comparison of online submodular algorithms. Note that in (a), we also assume that $\frac{1}{T}\sum_{t=1}^T f_t$ is strongly DR-submodular.}
	\label{table1}
\end{table}
\subsection{Contributions}
In this paper, we study online submodular maximization in three different online settings. Our contributions are listed as follows:\\
$\bullet$ In Section \ref{sdra}, we first introduce and characterize the class of strongly DR-submodular functions. We then propose Algorithm \ref{alg1} which achieves logarithmic regret bound in the adversarial setting with strongly DR-submodular utility functions $\{f_t\}_{t=1}^T$.\\
$\bullet$ We consider the stochastic random order model in Section \ref{ro}, where the utility functions are submodular and still chosen adversarially but they arrive in a uniformly random order. For this setting, we leverage concentration inequalities for sampling without replacement as our main non-standard tool to show that if the average of utility functions $\frac{1}{T}\sum_{t=1}^Tf_t$ is strongly DR-submodular, we can exploit Algorithm \ref{alg1} to achieve logarithmic regret bound even if each individual $f_t$ is only submodular.\\
$\bullet$ In Section \ref{iid}, we study the stochastic i.i.d. model where the sequence of generally non DR-submodular utility functions $\{f_t\}_{t=1}^T$ are drawn from an unknown distribution $\D$ with DR-submodular mean $f$, i.e., $\E_{\D}[f_t(\cdot)]=f(\cdot)$, and we only have access to unbiased stochastic gradient estimates of $\{f_t\}_{t=1}^T$. We propose Algorithm \ref{alg:exp4} and Algorithm \ref{alg:exp5} for this setting, and we obtain nearly optimal $\tilde{\O}(\sqrt{T})$ stochastic regret bounds, both in expectation and with high probability for each algorithm. Algorithm \ref{alg:exp4} manages to obtain the optimal approximation ratio $1-\frac{1}{e}$, however it requires $\O(T^{5/2})$ overall gradient evaluations. On the other hand, Algorithm \ref{alg:exp5} only achieves a $\frac{1}{e}$ approximation ratio while requiring a much lower $\O(T)$ total gradient evaluations.\\
Finally, in Section \ref{exp}, we test our online algorithms through numerical experiments on synthetic and real world datasets and highlight their superiority compared to previous techniques for the aforementioned three online frameworks. A summary of the results of this paper in comparison with prior works is presented in Table \ref{table1}. All missing proofs are provided in the Appendix.
\section{Online strongly DR-submodular maximization in the adversarial setting}\label{sdra}
We first define strongly DR-submodular functions below.
\begin{defn}(\textbf{Strong DR-submodularity \cite{bian2020continuous,NIPS2017_58238e9a}}) For $\mu \geq 0$, a function $f:\mathcal{K}\to \R$, $\mathcal{K}\subset \R_+^n$, is called $\mu$-strongly DR-submodular with respect to $\|\cdot\|$ if for all $x\in \mathcal{K}$ and $v\succeq 0$ (or $v\preceq 0$), the following holds:
\begin{equation*}
    f(x+v)\leq f(x)+\langle \nabla f(x),v\rangle -\frac{\mu}{2}\|v\|^2.
\end{equation*}
\end{defn}
Note that if $\mu =0$, $f$ is DR-submodular.
The following Lemma characterizes the class of strongly DR-submodular functions with respect to $\|\cdot\|_2$.
\begin{lemma}\label{lm1}
A twice differentiable function $f:\K \rightarrow \mathbb{R}$, $\K\subset \mathbb{R}_+^n$, is $\mu$-strongly DR-submodular with respect to $\|\cdot\|_2$ if for all $x\in \K$, we have $\nabla^2_{ii}f(x)\leq -\mu~\forall i\in[n]$ and $\nabla^2_{ij}f(x)\leq 0~\forall i\neq j$.
\end{lemma}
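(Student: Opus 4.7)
The plan is to establish the inequality in the definition via a second-order Taylor expansion with integral remainder. For any $x\in\K$ and any direction $v$ such that $x+v\in\K$ (the domain $\K$ will be convex so the segment $x+tv$, $t\in[0,1]$, is contained in $\K$), Taylor's theorem gives
\begin{equation*}
f(x+v) = f(x) + \langle \nabla f(x), v\rangle + \int_0^1 (1-t)\, v^\top \nabla^2 f(x+tv)\, v \, dt.
\end{equation*}
So the inequality defining $\mu$-strong DR-submodularity reduces to proving the quadratic form bound $v^\top \nabla^2 f(y) v \leq -\mu \|v\|_2^2$ for all $y=x+tv\in\K$ whenever $v\succeq 0$ (or $v\preceq 0$).

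Next I would expand the quadratic form coordinate-wise. Writing $H=\nabla^2 f(y)$,
\begin{equation*}
v^\top H v = \sum_{i=1}^n H_{ii}\, v_i^2 + \sum_{i\neq j} H_{ij}\, v_i v_j.
\end{equation*}
The diagonal bound $H_{ii}\leq -\mu$ gives $\sum_i H_{ii} v_i^2 \leq -\mu \sum_i v_i^2 = -\mu\|v\|_2^2$. For the off-diagonal sum, the hypotheses $v\succeq 0$ (or $v\preceq 0$) both imply $v_i v_j \geq 0$ for every pair $i\neq j$, and $H_{ij}\leq 0$, so each term is nonpositive. Combining, $v^\top H v\leq -\mu \|v\|_2^2$, as needed.

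Plugging this bound back into the Taylor expansion and using $\int_0^1 (1-t)\,dt = \tfrac{1}{2}$,
\begin{equation*}
f(x+v)\leq f(x) + \langle \nabla f(x), v\rangle - \frac{\mu}{2}\|v\|_2^2,
\end{equation*}
which is precisely the defining inequality for $\mu$-strong DR-submodularity.

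There is no real obstacle here; the argument is purely mechanical. The only subtlety is being careful that both sign conventions $v\succeq 0$ and $v\preceq 0$ are covered, but this is immediate since in either case every product $v_i v_j$ is nonnegative and hence the off-diagonal Hessian entries, being nonpositive, contribute a nonpositive amount to the quadratic form.
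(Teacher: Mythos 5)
Your proof is correct and follows essentially the same route as the paper's: both reduce the claim to the pointwise quadratic-form bound $v^\top \nabla^2 f(y)\, v \leq -\mu\|v\|_2^2$, obtained from the entrywise Hessian conditions together with the fact that $v_i v_j \geq 0$ when $v\succeq 0$ or $v\preceq 0$, and then integrate along the segment to pick up the factor $\tfrac{1}{2}$. Your use of the second-order Taylor formula with integral remainder is in fact a slightly cleaner packaging of the paper's double application of the mean value theorem, but the substance is identical.
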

\begin{proof}
For any $z,v\in \K$ and $i\in[n]$, we have:
\begin{align*}
    [\nabla^2 f(z)v]_i+\mu v_i = \nabla^2_{ii}f(z)v_i+\sum_{j\neq i}\nabla^2_{ij}f(z)v_j+\mu v_i=\underbrace{(\nabla^2_{ii}f(z)+\mu)}_{\leq 0}\underbrace{v_i}_{\geq 0}+\sum_{j\neq i}\underbrace{\nabla^2_{ij}f(z)}_{\leq 0}\underbrace{v_j}_{\geq 0}\leq 0.
\end{align*}
Therefore, $\nabla^2f(z)v+\mu v\preceq 0$ holds for all $z,v\in \K$. We can use the mean value theorem twice to write:
\begin{align*}
    f(x+v)-f(x)-\langle\nabla f(x),v\rangle&= \int_{0}^1\langle \nabla f(x+tv),v\rangle dt - \langle \nabla f(x),v\rangle\\
    &=\int_{0}^1\langle \nabla f(x+tv)-\nabla f(x),v \rangle dt\\
    &=\int_{0}^1 \langle t\nabla^2f(z)v,v\rangle dt,
\end{align*}
where $z$ is in the line segment between $x$ and $x+tv$. Combining the above two inequalities, we have:
\begin{align*}
    f(x+v)-f(x)-\langle\nabla f(x),v\rangle=\int_{0}^1 \langle t\underbrace{(\nabla^2f(z)v+\mu v)}_{\preceq 0},\underbrace{v}_{\succeq 0}\rangle dt-\mu\int_{0}^1 t\langle v,v\rangle dt\leq\frac{-\mu}{2}\|v\|_2^2.
\end{align*}
Thus, $f$ is $\mu$-strongly DR-submodular with respect to $\|\cdot\|_2$.
\end{proof}
Lemma \ref{lm1} states that for a twice differentiable DR-submodular function, if $\nabla^2_{ii}f(x)\leq -\mu$ holds for all $i\in[n]$ and $x\in\K$, the function is $\mu$-strongly DR-submodular with respect to $\|\cdot\|_2$. For instance, for the class of concave functions with negative dependence (introduced in section \ref{dr}), if $h_i$ is $\mu$-strongly concave for all $i\in[n]$, the function is $\mu$-strongly DR-submodular.\\
Consider the following adversarial online optimization problem over $T$ rounds: At each step $t\in[T]$, the algorithm chooses a decision variable $x_t\in \K$, where $\K$ is the fixed convex and compact domain with diameter $R$ with respect to $\|\cdot\|$ and $0\in \K$. Once the algorithm commits to the action $x_t$, the adversary reveals a monotone utility function $f_t$ which is $\mu$-strongly DR-submodular and $L$-smooth with respect to $\|\cdot\|$. Without loss of generality and for ease of notation, we assume the utility functions are normalized, i.e., $f_t(0)=0~\forall t\in[T]$. The overall goal is to maximize the cumulative utility $\sum_{t=1}^T f_t(x_t)$ or equivalently, minimize the $(1-\frac{1}{e})$-regret $R_T$ defined as:
\begin{equation*}
    R_T=(1-\frac{1}{e})\max_{x\in \K}\sum_{t=1}^T f_t(x)-\sum_{t=1}^T f_t(x_t),
\end{equation*}
where $1-\frac{1}{e}$ is the optimal polynomial time approximation ratio for offline monotone DR-submodular maximization. In other words, the regret compares the decisions of the algorithm with the $(1-\frac{1}{e})$-approximation to the optimal solution in hindsight.\\
For the setting with $\mu=0$, i.e., when all utility functions are only DR-submodular, \cite{pmlr-v84-chen18f} proposed the Meta-Frank-Wolfe algorithm with a provably optimal $\O(\sqrt{T})$ regret. Since for all $t\in[T]$, $f_t$ remains unknown until the algorithm chooses the action $x_t$, the Meta-Frank-Wolfe algorithm runs $K=\O(\sqrt{T})$ instances of a no-regret online linear maximization algorithm (such as Follow the Regularized Leader (FTRL)) to mimic the Frank-Wolfe variant of \cite{pmlr-v54-bian17a} for offline DR-submodular maximization in the online setting, and outputs the average of the decisions of these instances at each round.
\begin{algorithm}[t]
	\caption{Online strongly DR-submodular maximization algorithm}
	\begin{algorithmic}\label{alg1}
		\STATE \textbf{Input}: $K>0$ is the number of inner loops, $T$ is the horizon, and $\mu>0$.
		\STATE \textbf{Output}: $\{x_t :1\leq t\leq T\}$.
		\STATE Choose an off-the-shelf online strongly concave maximization algorithm and initialize $K$ instances $\{\mathcal{E}_k\}_{k=1}^K$ of it for online maximization of $\mu$-strongly concave utility functions over $\K$.
		\FOR{$t=1$ {\bfseries to} $T$}
		\STATE Set $\targ{x}{t}{1}=0$.
		\FOR{$k=1$ {\bfseries to} $K$}
		\STATE Let $\targ{v}{t}{k}$ be the vector selected by $\mathcal{E}_k$.
		\STATE $\targ{x}{t}{k+1}=\targ{x}{t}{k}+\frac{1}{K}\targ{v}{t}{k}$.
		\ENDFOR
		\STATE Play $x_t=\targ{x}{t}{K+1}$, observe the function $f_t$ and the reward $f_t (x_t)$.
		\STATE Feedback $\langle \targ{v}{t}{k},\nabla f_t (\targ{x}{t}{k})\rangle -\frac{\mu}{2}\|v_t^{(k)}\|^2$ as the payoff to be received by $\mathcal{E}_k$.
		\ENDFOR
	\end{algorithmic}
\end{algorithm}
We now propose a modified version of the Meta-Frank-Wolfe algorithm of \cite{pmlr-v84-chen18f} for strongly DR-submodular utility functions to be able to obtain improved logarithmic regret bound in this case. The algorithm is presented in Algorithm \ref{alg1}. The algorithm runs $K$ instances $\{\mathcal{E}_k\}_{k=1}^K$ of no-regret online strongly concave maximization algorithms (such as Follow the Leader (FTL)) where at each round $t\in[T]$, the instance $\mathcal{E}_k$ chooses the action $\targ{v}{t}{k}$ and upon committing to this decision, it receives a $\mu$-strongly concave payoff of $\langle \targ{v}{t}{k},\nabla f_t (\targ{x}{t}{k})\rangle -\frac{\mu}{2}\|v_t^{(k)}\|^2$. Algorithm \ref{alg1} outputs $x_t=\frac{1}{K}\sum_{k=1}^K\targ{v}{t}{k}$. Note that the FTL update rule for $\targ{v}{t}{k}$ is computationally efficient in many cases. To be precise, in case of the norm $\|\cdot\|_2$, we can write:
\begin{equation*}
    \targ{v}{t}{k}=\argmax_{x\in \K} \langle x,\sum_{s=1}^{t-1}\nabla f_s(\targ{x}{s}{k})\rangle -\frac{\mu(t-1)}{2}\|x\|_2^2=\text{Proj}_{\K}\big(\frac{1}{\mu (t-1)}\sum_{s=1}^{t-1}\nabla f_s(\targ{x}{s}{k})\big),
\end{equation*}
where $\text{Proj}_{\K}$ denotes Euclidean projection onto set $\K$.\\
The regret guarantee of Algorithm \ref{alg1} is provided below.
\begin{theorem}\label{thm1}
    Assume that $\{f_t\}_{t=1}^T$ is normalized (i.e., $f_t(0)=0$), monotone, $\mu$-strongly DR-submodular, $L$-smooth and $\beta$-Lipschitz with respect to $\|\cdot\|$. Using Algorithm \ref{alg1} with $K=\O(\frac{T}{\ln T})$, we have:
    \begin{equation*}
        R_T=(1-\frac{1}{e})\sum_{t=1}^Tf_t(x^*)-\sum_{t=1}^T f_t(x_t)\leq \O(\ln T),
    \end{equation*}
    where $x^*=\argmax_{x\in \K}\sum_{t=1}^Tf_t(x)$.
\end{theorem}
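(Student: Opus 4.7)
The plan is to mirror the offline DR-submodular Frank-Wolfe analysis of \cite{pmlr-v54-bian17a} inside Algorithm \ref{alg1}, but upgrade the per-instance regret from the usual $\O(\sqrt{T})$ (FTRL on linear losses) to $\O(\ln T)$ by running FTL on a $\mu$-strongly concave surrogate. The surrogate payoff of instance $\mathcal{E}_k$,
\begin{equation*}
g_{t,k}(v)=\langle v,\nabla f_t(\targ{x}{t}{k})\rangle-\tfrac{\mu}{2}\|v\|^2,
\end{equation*}
is $\mu$-strongly concave in $v$ with gradient of dual norm at most $\beta+\mu R$, so a standard strongly-concave FTL analysis produces a per-instance regret $R_K^{(k)}=\O(\ln T)$ against any fixed comparator in $\K$.

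The only place where strong DR-submodularity is genuinely used is to upper bound the objective regret by the instance regrets. Since $x^{*}\in\K\subset\R_+^n$ and $\targ{x}{t}{k}\succeq 0$, the direction $v=x^{*}\succeq 0$ is a valid non-negative perturbation at the iterate $\targ{x}{t}{k}$, so strong DR-submodularity gives $f_t(\targ{x}{t}{k}+x^{*})\le f_t(\targ{x}{t}{k})+g_{t,k}(x^{*})$, and monotonicity ($x^{*}\preceq\targ{x}{t}{k}+x^{*}$) yields the key comparator bound
\begin{equation*}
f_t(x^{*})-f_t(\targ{x}{t}{k})\le g_{t,k}(x^{*}).
\end{equation*}
Summing over $t$, invoking the regret of $\mathcal{E}_k$, and trivially dropping the non-positive term $-\tfrac{\mu}{2}\|\targ{v}{t}{k}\|^2$, I get
\begin{equation*}
\sum_{t=1}^T\bigl(f_t(x^{*})-f_t(\targ{x}{t}{k})\bigr)\le \sum_{t=1}^T\langle \targ{v}{t}{k},\nabla f_t(\targ{x}{t}{k})\rangle+R_K^{(k)}.
\end{equation*}

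Writing $\Phi_k=\sum_t f_t(\targ{x}{t}{k})$ and $F^{*}=\sum_t f_t(x^{*})$, and applying $L$-smoothness along the non-negative direction $\tfrac{1}{K}\targ{v}{t}{k}$ together with $\|\targ{v}{t}{k}\|\le R$, I can replace the inner product by function differences to obtain the single-step contraction
\begin{equation*}
F^{*}-\Phi_{k+1}\le \Bigl(1-\tfrac{1}{K}\Bigr)(F^{*}-\Phi_k)+\tfrac{1}{K}\Bigl(\tfrac{LR^2T}{2K}+R_K^{(k)}\Bigr).
\end{equation*}
Unrolling this over $k=1,\dots,K$ with the initial value $\Phi_1=0$ (from $\targ{x}{t}{1}=0$ and $f_t(0)=0$), and using $(1-1/K)^K\le 1/e$, yields $R_T=(1-\tfrac{1}{e})F^{*}-\Phi_{K+1}\le \tfrac{LR^2T}{2K}+\tfrac{1}{K}\sum_{k=1}^K R_K^{(k)}$. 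With $R_K^{(k)}=\O(\ln T)$ and $K=\Theta(T/\ln T)$, both error terms collapse to $\O(\ln T)$, giving the claim.

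The hard part is the second step: the strongly-concave surrogate $g_{t,k}$ carries an artificial $-\tfrac{\mu}{2}\|v\|^2$ penalty that we introduce only to make FTL achieve logarithmic per-instance regret, and a priori this penalty could degrade the objective regret back to $\O(\sqrt{T})$; the crucial observation is that strong DR-submodularity provides the matching $-\tfrac{\mu}{2}\|x^{*}\|^2$ bonus when comparing $f_t$ at $\targ{x}{t}{k}$ against the offline optimum $x^{*}$, so the penalty is exactly absorbed by the comparator inequality. Once this alignment is established, the remainder is essentially the bookkeeping of the Meta-Frank-Wolfe analysis of \cite{pmlr-v84-chen18f}.
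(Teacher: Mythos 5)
Your proposal is correct and follows essentially the same route as the paper's proof: the same comparator inequality $f_t(x^*)-f_t(\targ{x}{t}{k})\leq \langle x^*,\nabla f_t(\targ{x}{t}{k})\rangle-\frac{\mu}{2}\|x^*\|^2$ from monotonicity plus strong DR-submodularity (which absorbs the artificial strongly-concave penalty), the same logarithmic FTL regret for the surrogate payoffs, and the same smoothness-based contraction and unrolling with $(1-1/K)^K\leq 1/e$. No gaps.
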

Algorithm \ref{alg1} requires $\tilde{\O}(T^2)$ overall FTL updates to obtain the logarithmic regret bound derived in Theorem \ref{thm1}. As it was mentioned earlier, the FTL updates are easy to compute in many cases. However, the projection step onto $\K$ in the update rule might be computationally expensive for complex domains. In such cases, we can apply Algorithm \ref{alg1} to $\{\frac{1}{W}\sum_{t=\tau W+1}^{(\tau+1)W}f_t\}_{\tau=0}^{\lceil \frac{T}{W}\rceil -1}$ instead. If we choose $W=\O(T^{\epsilon})$ and $K=\O(T^{1-\epsilon})$ for some $\epsilon \in [0,1)$, Algorithm \ref{alg1} obtains an $\O(T^{\epsilon}\ln T)$ regret bound while performing $\O(T^{2-2\epsilon})$ overall FTL updates. For instance, if we set $\epsilon=\frac{1}{4}$, Algorithm \ref{alg1} achieves an $\tilde{\O}(T^{1/4})$ regret bound which is a significant improvement over the $\O(\sqrt{T})$ regret bound of the Meta-Frank-Wolfe Algorithm of \cite{pmlr-v84-chen18f} with the same $\O(T^{3/2})$ number of projections.
\section{Online submodular maximization in the random order model}\label{ro}
In this section, we focus on the random order adversary model where the sequence of objective functions $\{f_t\}_{t=1}^T$ is still chosen adversarially, but they arrive in a uniformly random order. Specifically, we first consider an online submodular maximization problem in which the submodular utility function at step $t\in[T]$ is quadratic, $f_t(x)=\frac{1}{2}x^T A^{(t)} x+(a^{(t)})^Tx$ where $A^{(t)}$ is symmetric, every off-diagonal entry $\targ{A}{ij}{t}~\forall i\neq j\in[n]$ is in the range $[-L,0]$, and every diagonal entry of the average matrix $\frac{1}{T}\sum_{t=1}^T A^{(t)}$ is in the range $[-L,-\mu]$ while each individual $|\targ{A}{ii}{t}|\leq L$ for all $i\in[n]$. In other words, each utility function $\{f_t\}_{t=1}^T$ is submodular, and the average of utility functions $\frac{1}{T}\sum_{t=1}^T f_t$ is $\mu$-strongly DR-submodular with respect to $\|\cdot\|_2$ and $L$-smooth with respect to $\|\cdot\|_1$. We first provide the following concentration inequality for randomly permuted sums.
\begin{theorem}\label{thm2}(Theorem 4.3 of \cite{bercu2015concentration}) Let $\{c_{r,s}\}_{r,s\in[T]}$ be an array of real numbers from the range $[-m_c,m_c]$. Let $Z_T=\sum_{r=1}^T c_{r,\prod_T(r)}$ where $\prod_T$ is drawn from the uniform distribution over the set of all permutations of $\{1,\dots,T\}$. Then, for any $\lambda>0$, we have:
\begin{equation*}
    \mathbb{P}\big(|Z_T-\E[Z_T]|\geq \lambda\big)\leq 4{\rm exp}\big(-\frac{\lambda^2}{16(\frac{\theta}{T}\sum_{r,s=1}^T c_{r,s}^2+\frac{m_c \lambda}{3})}\big),
\end{equation*}
where $\theta=\frac{5}{2}\ln 3-\frac{2}{3}$.
\end{theorem}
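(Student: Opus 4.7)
The plan is to build a Doob-type martingale adapted to the filtration obtained by revealing the permutation one coordinate at a time, and then apply a Bernstein/Freedman-type inequality for martingales with bounded differences. Writing $\Pi = \prod_T$ for brevity, let $\mathcal{F}_r = \sigma(\Pi(1),\dots,\Pi(r))$ and set $M_r = \E[Z_T \mid \mathcal{F}_r]$, so that $M_T - M_0 = Z_T - \E[Z_T]$ is a martingale sum. The first step is to bound the martingale differences $D_r = M_r - M_{r-1}$ uniformly: modifying a single coordinate of a permutation forces a swap with exactly one other coordinate, so the effect on $Z_T$ touches only $\O(1)$ of the summands $c_{r,\Pi(r)}$, each of which lies in $[-m_c, m_c]$; hence $|D_r| = \O(m_c)$.

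The second and most delicate step is to control the predictable quadratic variation $\sum_{r=1}^T \E[D_r^2 \mid \mathcal{F}_{r-1}]$. This is where the factor $\frac{1}{T}\sum_{r,s=1}^T c_{r,s}^2$ in the exponent must emerge. The key observation is the negative association of sampling without replacement: conditional on $\mathcal{F}_{r-1}$, the label $\Pi(r)$ is uniform over the $T-r+1$ remaining indices, so the relevant conditional second moment is essentially an average of $c_{r,s}^2$ over the still-unused $s$. Summing over $r$ and using the exchangeability of the uniform permutation, one can bound the total by a constant multiple of $\frac{1}{T}\sum_{r,s=1}^T c_{r,s}^2$; the $1/T$ scaling is what distinguishes the permutation bound from a naive Hoeffding-type reduction to sampling with replacement.

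With those two ingredients, the final step is a Bennett/Freedman-type exponential moment bound of the form $\mathbb{P}(|M_T - M_0| \geq \lambda) \leq 2\exp\!\bigl(-\lambda^2/(2V + 2 m \lambda/3)\bigr)$, where $V$ bounds the predictable quadratic variation and $m$ bounds the increments. The main obstacle, and the part I expect to be genuinely hard, is pinning down the sharp numerical constant $\theta = \frac{5}{2}\ln 3 - \frac{2}{3}$: this specific value, with its $\ln 3$, is characteristic of a discrete (modified) log-Sobolev analysis on the symmetric group rather than a crude Hoeffding reduction, and would require sharpening the moment-generating-function bound for permutation-based martingale increments. Since Theorem \ref{thm2} is quoted directly from \cite{bercu2015concentration}, in practice one would simply invoke their result rather than re-derive this sharp constant from scratch.
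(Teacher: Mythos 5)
The paper does not prove this statement: Theorem \ref{thm2} is imported verbatim as Theorem 4.3 of \cite{bercu2015concentration}, and no proof appears in the paper or its appendix, so there is no in-paper argument to compare yours against. Your sketch outlines the standard route to such bounds --- the Doob martingale $M_r=\E[Z_T\mid\sigma(\Pi(1),\dots,\Pi(r))]$, increments of order $m_c$, and a Freedman/Bennett-type bound driven by the predictable quadratic variation rather than by worst-case increments --- and you correctly identify the crux: a McDiarmid-style bounded-differences argument alone would only yield a variance proxy of order $T m_c^2$, whereas the stated bound requires controlling $\sum_r \E[D_r^2\mid\mathcal{F}_{r-1}]$ by a multiple of $\frac{1}{T}\sum_{r,s}c_{r,s}^2$, which is exactly the sampling-without-replacement refinement. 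As written, however, this is a plan rather than a proof: the uniform increment bound is asserted via a coordinate-swapping heuristic rather than computed from the Doob decomposition (the tail term $\sum_{u>r}\big(\E[c_{u,\Pi(u)}\mid\mathcal{F}_r]-\E[c_{u,\Pi(u)}\mid\mathcal{F}_{r-1}]\big)$ must be bounded explicitly), the quadratic-variation estimate is described only qualitatively, and the constant $\theta=\frac{5}{2}\ln 3-\frac{2}{3}$ is conceded outright; your attribution of that constant to a modified log-Sobolev analysis on the symmetric group is speculation and should not be presented as fact without checking the source. Since the theorem is a quoted external result, simply invoking \cite{bercu2015concentration} --- as you suggest in your final sentence and as the paper itself does --- is the correct and complete justification here; a self-contained proof would require carrying out the two quantitative steps above in full and recovering the exact constants.
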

Now, we apply the result of Theorem \ref{thm2} to our problem. Let $W\in [1,T]$. For a fixed $i\in[n]$, set:
\[c_{r,s} = 
\begin{cases}
	\targ{A}{ii}{s}-\frac{1}{T}\sum_{t=1}^T \targ{A}{ii}{t} &\quad\text{if } 1\leq r\leq W\\
	0 &\quad\text{o.w.}
\end{cases}\]
We can write:
\begin{align*}
    Z_T&=\sum_{r=1}^T c_{r,\Pi_T(r)}=\sum_{r=1}^W \targ{A}{ii}{\Pi_T(r)}-\frac{W}{T}\sum_{t=1}^T \targ{A}{ii}{t}.
\end{align*}
Taking expectation of the above equality, we obtain:
\begin{equation*}
    \E[Z_T]=\sum_{r=1}^W \E[\targ{A}{ii}{\Pi_T(r)}]-\frac{W}{T}\sum_{t=1}^T \targ{A}{ii}{t}=\sum_{r=1}^W (\frac{1}{T}\sum_{t=1}^T \targ{A}{ii}{t})-\frac{W}{T}\sum_{t=1}^T \targ{A}{ii}{t}=0.
\end{equation*}
Since $m_c = 2L$ in our setting, we have $\frac{\theta}{T}\sum_{r,s=1}^T c_{r,s}^2=\frac{\theta}{T}\sum_{r=1}^W\sum_{s=1}^T \big(\targ{A}{ii}{s}-\frac{1}{T}\sum_{t=1}^T \targ{A}{ii}{t}\big)^2\leq 4\theta L^2 W$. Plugging in $\lambda=W\epsilon$, we obtain:
\begin{align*}
    \mathbb{P}\big(|\frac{1}{W}\sum_{r=1}^W \targ{A}{ii}{\Pi_T(r)}-\frac{1}{T}\sum_{t=1}^T \targ{A}{ii}{t}|\geq \epsilon\big)&=\mathbb{P}\big(|\sum_{r=1}^W \targ{A}{ii}{\Pi_T(r)}-\frac{W}{T}\sum_{t=1}^T \targ{A}{ii}{t}|\geq W\epsilon\big)\\
    &\leq 4{\rm exp}\big(-\frac{W^2\epsilon^2}{16(4\theta L^2 W+\frac{2LW\epsilon}{3})}\big).
\end{align*}
$\mathbb{P}\big(|\frac{1}{W}\sum_{r=1}^W \targ{A}{ii}{\Pi_T(r)}-\frac{1}{T}\sum_{t=1}^T \targ{A}{ii}{t}|\geq \epsilon\big)$ is then bounded from above by
\[
\leq\begin{cases}
	4{\rm exp}\big(-\frac{W\epsilon^2}{128\theta L^2}\big) &\epsilon \leq 6\theta L\\
	4{\rm exp}\big(-\frac{3W\epsilon}{64L}\big) &\quad\text{o.w.}
\end{cases}.\]
If we choose $\epsilon \leq \frac{\mu}{2}$, the first case happens. Therefore, if we set $\frac{\delta}{nT}=4{\rm exp}\big(-\frac{W\epsilon^2}{128\theta L^2}\big)$, we can use the union bound to conclude that for every $i\in[n]$ and $\tau \in \{0,\dots,\lceil\frac{T}{W}\rceil-1\}$, with probability at least $1-\delta$, it holds that $\frac{1}{W}\sum_{t=\tau W+1}^{(\tau+1)W}\targ{A}{ii}{t}\leq -(\mu-\epsilon)\leq -\frac{\mu}{2}$ for any $W\geq W_0$ where $W_0=\frac{128\theta L^2}{\epsilon^2}\ln(\frac{4nT}{\delta})$. In other words, for large enough $W$, the average of each of the consecutive blocks of size $W$ of the utility functions is $(\frac{\mu}{2})$-strongly DR-submodular.\\
Now, consider general monotone submodular utility functions $\{f_t\}_{t=1}^T$. In this case, the Hessian of the utility functions are not fixed and therefore, we have to ensure the strong DR-submodularity of every block of size $W$ at every point $x\in \K$. The analysis in this setting is provided in the Appendix.\\
Considering that the strong DR-submodularity property propagates to the average of utility functions over sufficiently large blocks, we can apply Algorithm \ref{alg1} to these blocks to obtain logarithmic regret bounds. We summarize this result in the following theorem.
\begin{theorem}
    Assume that for all $t\in[T]$, the utility function $f_t$ is normalized, monotone and submodular, and the average of utility functions $\frac{1}{T}\sum_{t=1}^Tf_t$ is $\mu$-strongly DR-submodular and $L$-smooth. Then, with probability at least $1-\delta$, if we apply Algorithm \ref{alg1} to the utility functions $\{\frac{1}{W}\sum_{t=\tau W+1}^{(\tau+1)W}f_t\}_{\tau=0}^{\lceil \frac{T}{W}\rceil -1}$ for $W\geq W_0=\O(\frac{L^2}{\mu^2}\ln (\frac{nT}{\delta}))$, we have:
    \begin{equation*}
        \frac{1}{W}\sum_{\tau=0}^{\lceil \frac{T}{W}\rceil -1}\sum_{t=\tau W+1}^{(\tau+1)W}\big((1-\frac{1}{e})f_t(x^*)-f_t(z_{\tau})\big)\leq \O(\ln \frac{T}{W}).
    \end{equation*}
    Therefore, if we set $x_t=z_{\tau}$ for all $\tau \in \{0,\dots,\lceil \frac{T}{W}\rceil -1\}$ and $t\in \{\tau W+1,\dots,(\tau+1)W\}$, the following holds with probability at least $1-\delta$:
    \begin{equation*}
        R_T\leq \O(W\ln \frac{T}{W})=\O(\ln (\frac{nT}{\delta})\ln T).
    \end{equation*}
\end{theorem}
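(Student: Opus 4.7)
The plan is to reduce the problem to the adversarial strongly DR-submodular setting of Theorem~\ref{thm1} by treating the block-averaged functions $g_\tau(\cdot):=\frac{1}{W}\sum_{t=\tau W+1}^{(\tau+1)W} f_t(\cdot)$, for $\tau=0,\dots,\lceil T/W\rceil-1$, as a single adversarial sequence of length $\lceil T/W\rceil$ on which Algorithm~\ref{alg1} is run to produce the block decisions $\{z_\tau\}$. Once this reduction is in place, Theorem~\ref{thm1} yields a logarithmic-in-$\lceil T/W\rceil$ bound on the $(1-1/e)$-regret of the meta-game against $x^*$, and unfolding the block averages recovers the first displayed inequality of the theorem. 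Committing to $x_t=z_\tau$ for every $t$ inside block $\tau$ then promotes the per-block regret to the cumulative $R_T$ of order $W\ln(T/W)$, which with $W=W_0$ collapses to $\O(\ln(nT/\delta)\ln T)$.

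For the reduction to be legal, every $g_\tau$ must satisfy the hypotheses of Theorem~\ref{thm1}: normalization, monotonicity, smoothness, Lipschitzness, and strong DR-submodularity with a constant bounded away from zero independently of $T$. The first four are inherited from the per-round assumptions on $\{f_t\}$ (stated in the Appendix for general submodular $f_t$) by linearity of averaging. Strong DR-submodularity of the block averages is the heart of the argument and is where the random-order hypothesis enters. Following the calculation between Theorem~\ref{thm2} and the present statement, I would apply Theorem~\ref{thm2} to each diagonal Hessian entry $[\nabla^2 f_t]_{ii}$, centered at the average diagonal $\frac{1}{T}\sum_t [\nabla^2 f_t]_{ii}\leq -\mu$, and then union-bound over the $n$ coordinates and the $\lceil T/W\rceil$ blocks. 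Setting $\epsilon=\mu/2$ and equating the tail probability to $\delta/(nT)$ isolates the condition $W\geq W_0=\O(\frac{L^2}{\mu^2}\ln(nT/\delta))$ under which every block-averaged diagonal Hessian satisfies $[\nabla^2 g_\tau]_{ii}\leq -\mu/2$ with probability at least $1-\delta$, whence Lemma~\ref{lm1} certifies that every $g_\tau$ is $(\mu/2)$-strongly DR-submodular simultaneously.

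Conditional on this good event, I would invoke Theorem~\ref{thm1} on $\{g_\tau\}$ with horizon $\lceil T/W\rceil$ and strong DR-submodularity parameter $\mu/2$ to obtain
$$(1-1/e)\sum_\tau g_\tau(x^*)-\sum_\tau g_\tau(z_\tau)\leq \O(\ln(T/W)),$$
where $x^*=\argmax_{x\in\K}\sum_{t=1}^T f_t(x)=\argmax_{x\in\K}\sum_\tau g_\tau(x)$. Multiplying through by $W$ and expanding the definition of $g_\tau$ yields the first inequality of the theorem. Playing $x_t=z_\tau$ throughout each block turns the left-hand side into exactly $R_T$, giving $R_T\leq\O(W\ln(T/W))$. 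Substituting $W=W_0=\O(\frac{L^2}{\mu^2}\ln(nT/\delta))$ closes the argument.

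The main obstacle is extending the high-probability strong-DR-submodularity statement from the quadratic case (where each $\nabla^2 f_t\equiv A^{(t)}$ is constant) to general monotone submodular $f_t$. In the latter setting the Hessian varies with $x$, yet the strong-DR-submodularity inequality of Lemma~\ref{lm1} is a pointwise requirement that must be enforced uniformly over all $x\in\K$. The natural fix, carried out in the Appendix, is a covering argument: place an $\epsilon$-net on $\K$, invoke Theorem~\ref{thm2} at each net point and each coordinate direction, and extend to all of $\K$ using smoothness of $\nabla f_t$ to control the drift of the diagonal Hessian entries over an $\epsilon$-ball. This covering step is the only place in the proof that does more than stitch together Theorem~\ref{thm1} and Theorem~\ref{thm2}, and it is what ultimately produces the $\ln(nT/\delta)$ factor in $W_0$.
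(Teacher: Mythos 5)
Your proposal is correct and follows essentially the same route as the paper: apply the sampling-without-replacement concentration bound (Theorem~\ref{thm2}) to the diagonal Hessian entries with $\epsilon=\mu/2$, union-bound over coordinates and blocks (plus a discretization of $\K$ with a Hessian-Lipschitz argument for non-quadratic $f_t$, exactly as in the appendix) to certify $(\mu/2)$-strong DR-submodularity of every block average, and then invoke Theorem~\ref{thm1} on the sequence $\{g_\tau\}$ of length $\lceil T/W\rceil$ before multiplying back by $W$. No gaps to report.
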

\section{Online DR-submodular maximization in the i.i.d. model}\label{iid}
In this section, we focus on the setting where the sequence of utility functions $\{f_t\}_{t=1}^T$ is drawn i.i.d. from some fixed unknown distribution $f_t\sim \mathcal{D}$ where $\E_{\D}[f_t(\cdot)]=f(\cdot)$. In this framework, the performance is measured via the notion of stochastic regret defined as $\alpha\text{-SR}_T=T\alpha f(x^*)-\sum_{t=1}^T f(x_t)$, where $x^*=\argmax_{x \in \K} f(x)$. Note that unlike the adversarial regret, the stochastic regret is defined with respect to the expected function $f$.\\
\textbf{Assumption 1. }We make the following assumptions on the utility functions:\\
$\bullet$ For all $t\in[T]$ and $x\in \K$, we only have access to the unbiased gradient oracle $\tilde{\nabla}f_t(x)$, i.e., $\E[\tilde{\nabla}f_t(x)]=\nabla f(x)$.\\
$\bullet$ There exists $\sigma>0$ such that for any $x\in \K$ and $t\in[T]$, $\|\tilde{\nabla} f_t(x)-\nabla f(x)\|_2\leq \sigma$ holds.\\
$\bullet$ $f$ is monotone DR-submodular and $L$-smooth.\\
$\bullet$ $\{f_t\}_{t=1}^T$ is $L$-smooth.\\
Note that we do not require the utility functions $\{f_t\}_{t=1}^T$ to be DR-submodular which further complicates the analysis in this setting.\\
If $f$ was known in advance, we would use the Frank-Wolfe variant of \cite{pmlr-v54-bian17a} for offline DR-submodular maximization. To be precise, for all $t+1\in [T]$, starting from $\targ{x}{t+1}{1}=0$, we would perform $K_t$ Frank-Wolfe updates where at each iteration $k\in[K_t]$, we choose $\targ{v}{t}{k}$ according to $\targ{v}{t}{k}=\argmax_{x\in \K}\langle x,\nabla f(x_{t+1}^{(k)})\rangle$, and perform the update $x_{t+1}^{(k+1)}=x_{t+1}^{(k)}+\frac{1}{K_t}\targ{v}{t}{k}$. However, $f$ is not available in advance. To tackle this problem, at round $t+1\in[T]$, we can use the average of utility functions $\{f_s\}_{s=1}^t$ observed so far as an estimate of $f$. Using this technique, we propose Algorithm \ref{alg:exp4} and provide its stochastic regret guarantee in the theorem below.
\begin{algorithm}[tbhp]
	\caption{}
	\begin{algorithmic}\label{alg:exp4}
		\STATE \textbf{Input:} $\{K_t=\sqrt{t}\}_{t=1}^T$ and $T>0$ is the horizon.
		\FOR{$t=1,\dots,T$}
		\STATE Play $x_t=x_t^{(K_{t-1}+1)}$ and observe $f_t$.
	    \STATE Set $x_{t+1}^{(1)}=0$.
		\FOR{$k=1,2,\dots,K_t$}
		\STATE $d_t^{(k)}=\frac{1}{t}\sum_{\tau=1}^t\tilde{\nabla} f_{\tau}(x_{t+1}^{(k)})$.
		\STATE $v_t^{(k)}=\argmax_{x\in \K}\langle x, d_t^{(k)} \rangle$.
		\STATE Set $x_{t+1}^{(k+1)}=x_{t+1}^{(k)}+\frac{1}{K_t}v_t^{(k)}$.
		\ENDFOR
		\ENDFOR
	\end{algorithmic}
\end{algorithm}
\begin{theorem}
    For online DR-submodular maximization in the i.i.d. model, if Assumption $1$ holds, the expected stochastic regret of Algorithm \ref{alg:exp4} is as follows:
    \begin{equation*}
    \E[(1-\frac{1}{e})\text{-SR}_T]\leq \sum_{t=1}^T\frac{LR^2}{2K_t}+\O(\sigma\sqrt{T}).
    \end{equation*}
    Moreover, with probability at least $1-\delta$, we have:
    \begin{equation*}
    (1-\frac{1}{e})\text{-SR}_T\leq \sum_{t=1}^T\frac{LR^2}{2K_t}+\O(\sigma\sqrt{T\ln (\frac{T^{3/2}}{\delta})}).
\end{equation*}
    Therefore, if we set $K_t=\sqrt{t}~\forall t\in[T]$, Algorithm \ref{alg:exp4} obtains an $\tilde{O}(\sqrt{T})$ stochastic regret bound, both in expectation and with high probability.
\end{theorem}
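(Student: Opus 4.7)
The plan is to split the analysis into a deterministic per-round Frank--Wolfe descent bound that isolates the gradient-estimation error $\epsilon_t^{(k)}:=\|d_t^{(k)}-\nabla f(x_{t+1}^{(k)})\|_2$, a concentration argument for $\epsilon_t^{(k)}$ under Assumption~1, and a summation over the horizon. The expected and high-probability bounds differ only in how the per-iteration error is controlled.

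For Step~1, I would mirror the offline analysis of \cite{pmlr-v54-bian17a}: starting from $L$-smoothness along the nonnegative direction $\tfrac{1}{K_t}v_t^{(k)}$, decompose $\langle v_t^{(k)},\nabla f(x_{t+1}^{(k)})\rangle=\langle v_t^{(k)},d_t^{(k)}\rangle+\langle v_t^{(k)},\nabla f(x_{t+1}^{(k)})-d_t^{(k)}\rangle$; use linear-oracle optimality $\langle v_t^{(k)},d_t^{(k)}\rangle\geq\langle x^*,d_t^{(k)}\rangle$ together with the DR-submodular inequality $\langle x^*,\nabla f(x_{t+1}^{(k)})\rangle\geq f(x^*)-f(x_{t+1}^{(k)})$ (obtained by applying concavity along nonnegative directions at $x^*\vee x_{t+1}^{(k)}$ and then using monotonicity); and bound the cross term by $R\,\epsilon_t^{(k)}$ via Cauchy--Schwarz and the diameter $R$. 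Unrolling the resulting recursion from $x_{t+1}^{(1)}=0$ and using $(1-1/K_t)^{K_t}\leq 1/e$ yields
\begin{equation*}
(1-\tfrac{1}{e})f(x^*)-f(x_{t+1})\leq\tfrac{LR^2}{2K_t}+\tfrac{R}{K_t}\sum_{k=1}^{K_t}\epsilon_t^{(k)}.
\end{equation*}

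For Step~2, let $\mathcal{F}_t^{(k)}$ be the $\sigma$-algebra generated by all randomness used to construct $x_{t+1}^{(k)}$. Under Assumption~1, the samples $\{\tilde{\nabla}f_\tau(x_{t+1}^{(k)})\}_{\tau=1}^t$ are, conditionally on $\mathcal{F}_t^{(k)}$, independent across $\tau$, have conditional mean $\nabla f(x_{t+1}^{(k)})$, and deviate from it in $\|\cdot\|_2$ by at most $\sigma$. This yields $\E[(\epsilon_t^{(k)})^2\mid\mathcal{F}_t^{(k)}]\leq\sigma^2/t$, hence $\E[\epsilon_t^{(k)}]\leq\sigma/\sqrt{t}$ by Jensen, and applying McDiarmid's inequality to the $(2\sigma/t)$-bounded-differences map $(\tilde{\nabla}f_\tau)_\tau\mapsto\epsilon_t^{(k)}$ gives $\epsilon_t^{(k)}\leq\O(\sigma\sqrt{\ln(1/\delta')/t})$ with conditional probability at least $1-\delta'$.

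For Step~3, summing the per-round bound over $t=0,\dots,T-1$ and plugging in $\E[\epsilon_t^{(k)}]\leq\sigma/\sqrt{t}$ gives a gradient-error contribution of $R\sum_{t=1}^T\sigma/\sqrt{t}=\O(\sigma\sqrt{T})$, proving the in-expectation claim. For the high-probability claim, a union bound over the $\sum_{t=1}^T K_t=\O(T^{3/2})$ inner iterations with $\delta'=\delta/T^{3/2}$ yields the uniform bound $\epsilon_t^{(k)}\leq\O(\sigma\sqrt{\ln(T^{3/2}/\delta)/t})$, whose cumulative contribution sums to $\O(\sigma\sqrt{T\ln(T^{3/2}/\delta)})$; specializing $K_t=\sqrt{t}$ makes $\sum_tLR^2/(2K_t)=\O(\sqrt{T})$ and yields the claimed $\tilde{\O}(\sqrt{T})$ regret. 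The main technical subtlety is in Step~2: the query point $x_{t+1}^{(k)}$ is itself a function of earlier oracle outputs, so one must interpret each oracle call as drawing fresh randomness external to the filtration at the time of the call, so that the unbiasedness $\E[\tilde{\nabla}f_t(x)]=\nabla f(x)$ of Assumption~1 applies in its conditional form; once this is granted, the rest is standard bookkeeping.
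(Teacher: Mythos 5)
Your proposal matches the paper's proof essentially step for step: the same Frank--Wolfe descent decomposition isolating $\langle v_t^{(k)}-x^*,\nabla f(x_{t+1}^{(k)})-d_t^{(k)}\rangle$, the same recursion unrolled with $(1-1/K_t)^{K_t}\leq 1/e$, the same $\E\|\epsilon_t^{(k)}\|\leq\sigma/\sqrt{t}$ via independence of the $t$ samples plus Jensen, and the same union bound over the $\O(T^{3/2})$ inner iterations. The only deviation is that you invoke McDiarmid's bounded-differences inequality for the high-probability tail where the paper cites a vector-norm concentration corollary; both are valid here, and your explicit remark that the oracle must supply fresh randomness conditional on the (sample-dependent) query point $x_{t+1}^{(k)}$ makes precise a step the paper leaves implicit.
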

In the convex setting, $\O(\sqrt{T})$ stochastic regret bound is optimal and the same lower bound extends to the DR-submodular framework as well. Algorithm \ref{alg:exp4} manages to obtain the nearly optimal $\tilde{\O}(\sqrt{T})$ stochastic regret bound. However, in each step $t\in[T]$, the algorithm requires gradient oracle access at $\O(t^{3/2})$ points which leads to $\O(T^{5/2})$ overall gradient evaluations. Therefore, in applications where computing the gradient of utility functions is computationally expensive, Algorithm \ref{alg:exp4} has a high computational cost and may not be suitable.\\
To remedy this issue, inspired by the variance-reduction technique of \cite{xie2020efficient}, we propose Algorithm \ref{alg:exp5}. In this algorithm, for all $t\in[T]$, we estimate $\nabla f(x_{t})$ using the recursive estimator $d_t=\tilde{\nabla} f_{t}(x_{t})+(1-\rho_t)\big(d_{t-1}-\tilde{\nabla} f_t(x_{t-1})\big)$.
The regret bound of Algorithm \ref{alg:exp5} is provided below.
\begin{theorem}
    For online DR-submodular maximization in the i.i.d. model, if Assumption $1$ holds, Algorithm \ref{alg:exp5} obtains the following stochastic regret bound in expectation:
    \begin{equation*}
    \E[(\frac{1}{e})\text{-SR}_T]\leq \O(\sigma\sqrt{T}).
    \end{equation*}
    Also, the following holds with probability at least $1-\delta$:
    \begin{equation*}
    (\frac{1}{e})\text{-SR}_T\leq \O(\sigma\sqrt{T\ln (\frac{T}{\delta})}).
\end{equation*}
\end{theorem}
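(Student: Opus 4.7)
The plan is to combine a continuous-greedy-style one-step descent inequality with a variance bound for the recursive estimator, then upgrade the expectation bound to a high-probability bound via martingale concentration.

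First, I would derive the basic per-round inequality. Using $L$-smoothness of $f$ and the Frank-Wolfe-type update $x_{t+1}=x_t+\frac{1}{T}v_t$ (assumed to be the step employed by Algorithm \ref{alg:exp5}), together with optimality of $v_t$ (so $\langle d_t,v_t\rangle\ge\langle d_t,x^*\rangle$), monotonicity, and concavity of $f$ along non-negative directions, I obtain
\begin{equation*}
f(x^*)-f(x_{t+1})\le\bigl(1-\tfrac{1}{T}\bigr)\bigl(f(x^*)-f(x_t)\bigr)+\tfrac{1}{T}\langle d_t-\nabla f(x_t),\,v_t-x^*\rangle+\tfrac{LR^2}{2T^2}.
\end{equation*}
Iterating and using $\sum_{t=1}^{T}(1-1/T)^{t-1}\le T(1-1/e)$, I get, after swapping summation orders,
\begin{equation*}
\tfrac{1}{e}\,T f(x^*)-\sum_{t=1}^{T}f(x_t)\le\O(1)+\sum_{s=1}^{T}\bigl\langle d_s-\nabla f(x_s),\,v_s-x^*\bigr\rangle,
\end{equation*}
so the problem reduces to bounding the stochastic inner-product sum by $\O(\sigma\sqrt{T})$ in expectation and $\O(\sigma\sqrt{T\ln(T/\delta)})$ with high probability.

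Second, I would bound the error of the recursive gradient estimator. Writing
\begin{equation*}
d_t-\nabla f(x_t)=(1-\rho_t)\bigl(d_{t-1}-\nabla f(x_{t-1})\bigr)+e_t,
\end{equation*}
where $e_t:=\tilde{\nabla}f_t(x_t)-\nabla f(x_t)-(1-\rho_t)\bigl[\tilde{\nabla}f_t(x_{t-1})-\nabla f(x_{t-1})\bigr]$ is a martingale difference, I use the bounded noise assumption and the $L$-smoothness of each $f_t$ to bound $\|e_t\|_2$: the first term gives $\O(\rho_t\sigma)$, and the bracketed term is controlled by $2L\|x_t-x_{t-1}\|_2\le 2LR/T$ (from the single Frank-Wolfe step). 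Taking conditional expectations gives $\E[\|e_t\|_2^2\mid \mathcal{F}_{t-1}]\le 2\rho_t^2\sigma^2+\O(L^2R^2/T^2)$, and solving the resulting variance recursion with $\rho_t\asymp 1/t^{2/3}$ (or a similar tuning) yields $\E[\|d_t-\nabla f(x_t)\|_2^2]=\O(\sigma^2/\sqrt{t})$, which by Cauchy–Schwarz and the diameter bound $\|v_t-x^*\|_2\le R$ gives the expected regret $\O(\sigma\sqrt{T})$ after summing.

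For the high-probability version, the stochastic sum must be handled without averaging, which is the main technical obstacle. My plan is to apply a Freedman-type martingale Bernstein inequality both to the sum $\sum_{s\le t}\|e_s\|_2^2$ (to obtain an upper tail bound on $\|d_t-\nabla f(x_t)\|_2^2$ for each $t$) and to $\sum_s\langle d_s-\nabla f(x_s),v_s-x^*\rangle$ directly, using a union bound across $t=1,\dots,T$ to pay only a $\ln(T/\delta)$ factor. The delicate point is that the increments $e_t$ depend on the entire random trajectory $x_1,\dots,x_t$, so the conditional variance and increment bounds must be stated uniformly over the (random) sample path; the previously derived deterministic bounds $\|e_t\|_2\le \O(\rho_t\sigma+LR/T)$ and $\E[\|e_t\|_2^2\mid\mathcal{F}_{t-1}]\le\O(\rho_t^2\sigma^2)$ are exactly what is needed to invoke Freedman. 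Once this uniform bound $\|d_t-\nabla f(x_t)\|_2\le\O(\sigma\sqrt{\ln(T/\delta)/t^{1/2}})$ holds with probability at least $1-\delta$, substituting it into the reduced regret expression and summing yields the claimed $\O(\sigma\sqrt{T\ln(T/\delta)})$ high-probability bound.
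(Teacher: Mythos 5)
Your overall architecture matches the paper's: the same smoothness/monotonicity/cross-concavity chain gives the per-round recursion, the error of the recursive estimator is decomposed into martingale-difference innovations $e_t$ with $\|e_t\|\le \rho_t\sigma+2LR/T$, and concentration finishes the job. However, there are two concrete gaps.

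First, the expectation bound does not close. The algorithm fixes $\rho_t=\frac{1}{t+1}$; you are not free to retune it to $\rho_t\asymp t^{-2/3}$, and more importantly the rate you claim, $\E\|d_t-\nabla f(x_t)\|_2^2=\O(\sigma^2/\sqrt{t})$, is too weak: Jensen gives $\E\|d_t-\nabla f(x_t)\|_2=\O(\sigma t^{-1/4})$, and summing over $t$ yields $\Theta(\sigma T^{3/4})$, not $\O(\sigma\sqrt{T})$. To reach $\O(\sigma\sqrt{T})$ you need $\E\|d_t-\nabla f(x_t)\|_2=\O(\sigma/\sqrt{t})$, i.e., a variance of order $\sigma^2/t$. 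Your own variance recursion does deliver this \emph{with the algorithm's actual} $\rho_t=\frac{1}{t+1}$: multiplying $a_t\le(1-\frac{1}{t+1})^2a_{t-1}+2\rho_t^2\sigma^2+\O(L^2R^2/T^2)$ by $(t+1)^2$ telescopes to $a_t\le \O(\sigma^2/t)+\O(L^2R^2t/T^2)$, whose square roots sum to $\O((\sigma+LR)\sqrt{T})$. The paper reaches the same $1/\sqrt{t}$ rate by a different route: it unrolls $\epsilon_t$ into an explicit martingale sum $\sum_{\tau\le t}\zeta_{t,\tau}$ with each increment bounded by $\O((LR+\sigma)/(t+1))$ (using $\prod_{s=\tau}^{t}(1-\rho_s)=\frac{\tau}{t+1}$) and integrates the resulting sub-Gaussian tail.

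Second, the high-probability step as sketched would fail. A Freedman bound on $\sum_{s\le t}\|e_s\|_2^2$ does not control $\|d_t-\nabla f(x_t)\|_2$ pathwise: the identity converting a sum of squared increments into the squared norm of the weighted sum relies on the orthogonality of martingale differences, which holds only in expectation. Likewise, $\langle d_s-\nabla f(x_s),v_s-x^*\rangle$ is not a martingale difference sequence — $d_s-\nabla f(x_s)$ contains the $\mathcal{F}_{s-1}$-measurable component $(1-\rho_s)(d_{s-1}-\nabla f(x_{s-1}))$, and $v_s$ is a deterministic function of $d_s$ — so Freedman cannot be applied to that sum directly. What is needed (and what the paper does) is a concentration inequality for \emph{vector-valued} martingales applied, for each fixed $t$, to the unrolled representation $\epsilon_t=\sum_{\tau\le t}\zeta_{t,\tau}$ with increment bounds $\O((LR+\sigma)/(t+1))$, followed by a union bound over $t\in[T]$; this yields $\|\epsilon_t\|_2\le\O((LR+\sigma)\sqrt{\ln(T/\delta)/t})$ simultaneously for all $t$ and hence the claimed $\O(\sigma\sqrt{T\ln(T/\delta)})$ bound.
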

\begin{algorithm}[tbhp]
	\caption{}
	\begin{algorithmic}\label{alg:exp5}
		\STATE \textbf{Input:} $\{\rho_t=\frac{1}{t+1}\}_{t=1}^T$ and $T>0$ is the horizon.
		\FOR{$t=1,\dots,T$}
		\STATE Play $x_t$ and observe $f_t$.
		\IF{$t=1$}
		\STATE $d_t=\tilde{\nabla} f_t(x_t)$.
		\ELSE
		\STATE $d_t=\tilde{\nabla} f_{t}(x_{t})+(1-\rho_t)\big(d_{t-1}-\tilde{\nabla} f_t(x_{t-1})\big)$.
		\ENDIF
		\STATE $v_t=\argmax_{x\in \K}\langle x, d_t\rangle$.
		\STATE Set $x_{t+1}=x_{t}+\frac{1}{T}v_t$.
		\ENDFOR
	\end{algorithmic}
\end{algorithm}
At each round $t\in[T]$, Algorithm \ref{alg:exp4} requires computing the gradient at $\O(t^{3/2})$ points whereas Algorithm \ref{alg:exp5} requires only $2$ gradient evaluations per step. Therefore, the overall number of gradient evaluations of Algorithm \ref{alg:exp4} and Algorithm \ref{alg:exp5} are $\O(T^{5/2})$ and $\O(T)$ respectively. Moreover, both algorithms require solving just $1$ linear optimization problem over the constraint set $\K$ per round. However, despite the lower computational complexity of Algorithm \ref{alg:exp5}, this algorithm only manages to obtain an $\O(\sqrt{T})$ bound for the $(\frac{1}{e})$-stochastic regret while Algorithm \ref{alg:exp4} obtains similar bounds for the $(1-\frac{1}{e})$-regret which is the optimal approximation ratio for offline DR-submodular maximization.\\
The only prior study of online DR-submodular maximization in the i.i.d. model was done by \cite{pmlr-v80-chen18c} in which the authors proposed the OSFW algorithm with a sub-optimal $\O(T^{2/3})$ $(1-\frac{1}{e})$-stochastic regret bound in expectation. The OSFW algorithm requires only $1$ gradient evaluation per round, however, there is a mistake in the analysis of the regret bound (using the inequality $(1-\frac{1}{T})^t\leq \frac{1}{e}$ for all $t\in[T]$ which is incorrect) and consequently, the approximation ratio is in fact $\frac{1}{e}$.
\section{Numerical Examples}\label{exp}
\begin{figure}
  \centering
  \begin{tabular}[t]{ccc}
    \includegraphics[width=.3\linewidth]{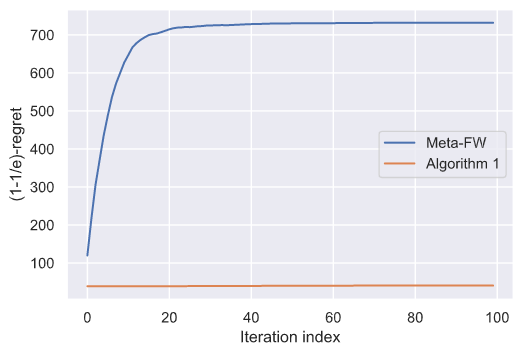} & \includegraphics[width=.33\linewidth]{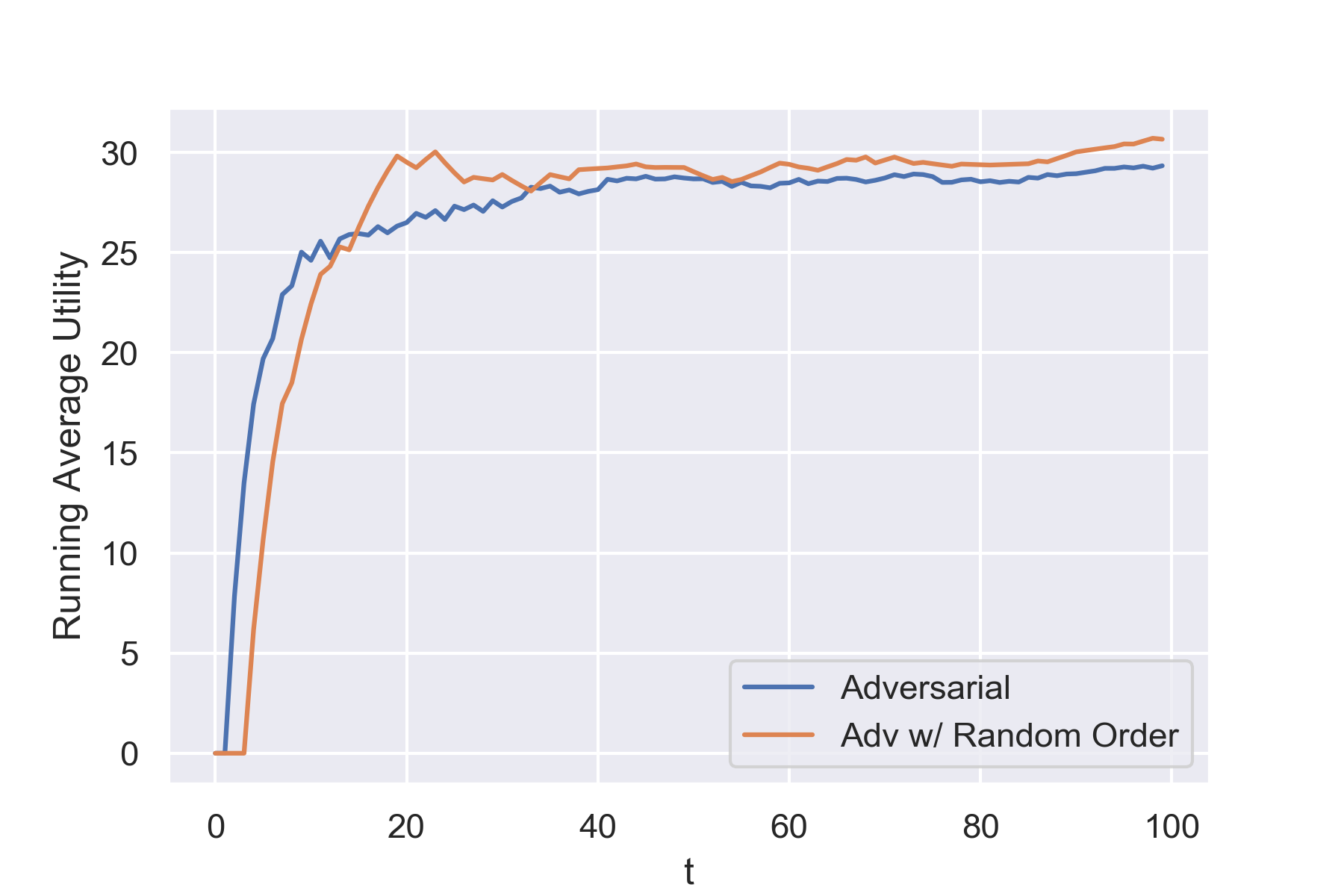} &  \includegraphics[width=.33\linewidth]{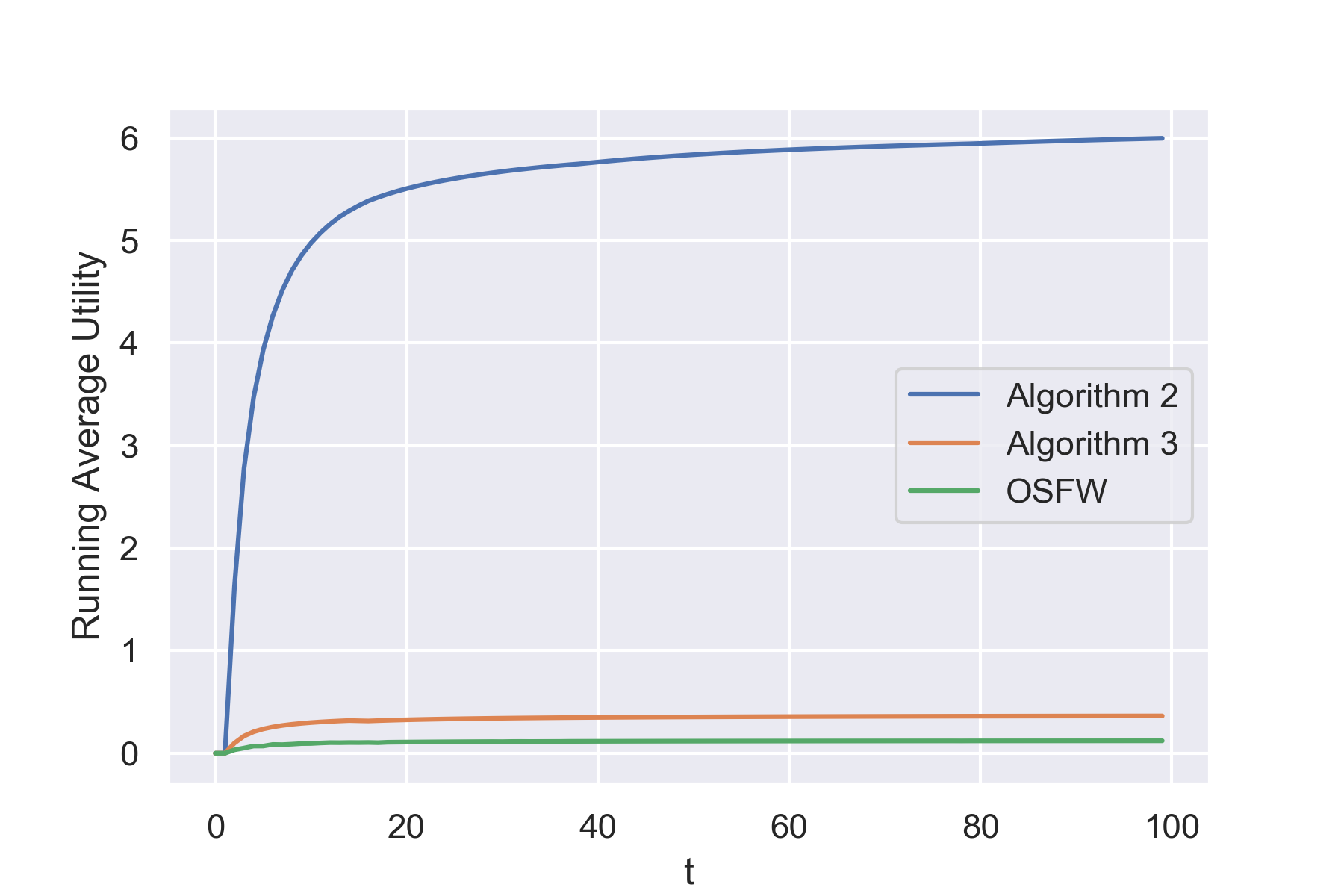}\\
    \small (a) & \small (b) & \small (c)
  \end{tabular}
  \caption{(a) Experiment 1, (b) Experiment 2, (c) Experiment 3}\label{fig}
\end{figure}
For the first experiment, we used the MovieLens dataset \cite{harper2015movielens}, containing anonymous ratings of approximately 3900 movies made by 6040 MovieLens users, and we studied a movie recommendation problem. We extracted 17 movies and 100 users with the most number of ratings. Therefore, $n=17$ and $T=100$. For all $t\in[T]$, the utility function $f_t$ is defined as $f_t(x)=5\sum_{i=1}^n \ln (1+R_i^{(t)}x_i)+\sum_{i,j:i<j}\theta_{ij}^{(t)} x_i x_j$ where $R_i^{(t)}\in[0,1]$ is the rescaled rating of $t$-th online user for the $i$-th movie, and $\theta_{ij}^{(t)}$ is uniformly distributed in the range $[-1,0]$ if the $i$-th and $j$-th movies are from the same genre. In other words, the utility function captures the diversity of the recommended movies. We chose $\K=\{x\in \R^n:1^T x\leq 4, 0\preceq x\preceq 1\}$, i.e., the algorithm has to recommend 4 movies to each arriving user. We ran Algorithm \ref{alg1} and the Meta-Frank-Wolfe algorithm of \cite{pmlr-v84-chen18f} with $K=100$, and plotted the $(1-\frac{1}{e})$-regret versus the number of iterations in Figure \ref{fig}(a). This plot verifies that the regret of Algorithm \ref{alg1} is significantly smaller than the Meta-Frank-Wolfe algorithm for strongly DR-submodular functions.\\
For the next two experiments, we set $m=2$, $n=4$ and $T=100$. We defined $\K=\{x\in \R^n:Cx\preceq b,0\preceq x\preceq 1\}$, where $C\in \R^{m\times n}$ is a matrix whose entries are uniformly distributed in the range $[0,1]$ and $b\in \R^m$ is the vector of all ones. In experiment 2, we considered quadratic utility functions of the form $f_t(x)=\frac{1}{2}x^T A^{(t)} x+(a^{(t)})^Tx$ and we chose $a^{(t)}=-(A^{(t)})^T\textbf{1}$ to ensure monotonocity of the utility functions. We chose $\{A^{(t)}\}_{t=1}^T$ as random matrices whose off-diagonal entries are uniformly distributed non-positive values in $[-10,0]$. For $t\in\{1,\dots,\frac{T}{2}\}$, we selected the diagonal entries of the matrices from the range $[-10,0]$ and for $t\in\{\frac{T}{2}+1,\dots,T\}$, these entries are uniformly distributed in the range $[0,5]$. Therefore, in this experiment, all utility functions are submodular while the average utility is $1.25$-strongly DR-submodular with respect to $\|\cdot\|_2$ and $10$-smooth with respect to $\|\cdot\|_1$ in expectation. We applied Algorithm \ref{alg1} to this problem. Then, we randomly permuted the ordering of the utility functions and applied Algorithm \ref{alg1} to blocks of size $W=5$. The result, depicted in Figure \ref{fig}(b), highlights the improved overall utility in the random order setting.\\
In the third experiment, we studied online DR-submodular maximization in the i.i.d. model. For the underlying utility function $f(x)=(\frac{1}{2}x-\textbf{1})^TAx$, we chose $A$ to be a random matrix whose entries are sampled uniformly at random from the range $[-1,0]$. Therefore, $f$ is DR-submodular and $1$-smooth. For all $t\in[T]$, we set $A^{(t)}=A+N^{(t)}$, where $N^{(t)}$ is a matrix with entries uniformly distributed in the range $[-4,4]$. Thus, $\{A^{(t)}\}_{t=1}^T$ is an unbiased estimator of $A$. Note that since $\{A^{(t)}\}_{t=1}^T$ may have positive entries, each individual utility function $\{f_t\}_{t=1}^T$ is not necessarily DR-submodular. The running average of the utility of Algorithm \ref{alg:exp4}, Algorithm \ref{alg:exp5} and the OSFW algorithm of \cite{pmlr-v80-chen18c} is depicted in Figure \ref{fig}(c). It can be observed that the regret of Algorithm \ref{alg:exp4} is smaller than the other two algorithms because the average of the utility functions observed so far is a more accurate estimation of $f$ and has a lower variance at the expense of higher computational complexity. 
\section{Conclusion and future work}\label{con}
We studied online submodular maximization in three different settings. First, we characterized the class of strongly DR-submodular functions and provided Algorithm \ref{alg1} with logarithmic regret bound in the adversarial setting. Next, we considered online submodular maximization in the random order model and showed how we can use Algorithm \ref{alg1} to obtain similar logarithmic regret bounds with high probability despite each individual utility function not being DR-submodular. Finally, we focused on online DR-submodular maximization in the i.i.d. model, and we provided two algorithms with nearly optimal $\tilde{\O}(\sqrt{T})$ regret bounds, both in expectation and with high probability. This work could be extended in a number of directions. It is interesting to see whether Algorithm \ref{alg1} could be used in a best-of-both-worlds fashion for the adversarial and random order frameworks, i.e., if we can obtain logarithmic regret bounds in the random order model by simply applying Algorithm \ref{alg1} on the individual utility functions (instead of blocks of functions). Moreover, providing an algorithm for the i.i.d. setting with optimal $\O(\sqrt{T})$ $(1-\frac{1}{e})$-stochastic regret bound and $\O(T)$ overall gradient evaluations is yet to be done.
\bibliography{refROOCO}
\newpage
\appendix
\section*{Appendix}
\section*{Missing proofs}
\subsection*{Proof of Theorem 1}
For all $t\in[T]$, using the $L$-smoothness of $f_t$, we have:
\begin{align*}
    f_t(\targ{x}{t}{k+1})&\geq f_t(\targ{x}{t}{k})+\frac{1}{K}\langle \targ{v}{t}{k},\nabla f_t(\targ{x}{t}{k})\rangle-\frac{L}{2K^2}\|\targ{v}{t}{k}\|^2.
\end{align*}
Considering that the diameter of $\K$ is $R$, taking the sum over $t\in[T]$, we obtain:
\begin{align*}
    \sum_{t=1}^T \big(f_t(\targ{x}{t}{k+1})-f_t(\targ{x}{t}{k})\big)&\geq \frac{1}{K}\sum_{t=1}^T \langle \targ{v}{t}{k},\nabla f_t(\targ{x}{t}{k})\rangle-\frac{LR^2T}{2K^2}.
\end{align*}
For all $k\in[K]$, let $\targ{v}{k}{*}=\argmax_{v\in \mathcal{K}} \big(\sum_{t=1}^T\langle v,\nabla f_t (x_t^{(k)})\rangle -\frac{\mu T}{2}\|v\|^2\big)$. If we use Follow the Leader (FTL) as $\mathcal{E}_k~\forall k\in[K]$, using Corollary 7.16 of \cite{orabona2019modern}, we can write:
\begin{align*}
    \sum_{t=1}^T\langle x^*,\nabla f_t (x_t^{(k)})\rangle -\frac{\mu T}{2}\|x^*\|^2&\overset{\text{(a)}}\leq\sum_{t=1}^T\langle \targ{v}{k}{*},\nabla f_t (x_t^{(k)})\rangle-\frac{\mu T}{2}\|\targ{v}{k}{*}\|^2\\
    &\overset{\text{(b)}}\leq\sum_{t=1}^T\big(\langle \targ{v}{t}{k},\nabla f_t (x_t^{(k)})\rangle-\frac{\mu}{2}\|\targ{v}{t}{k}\|^2\big)+\frac{(\beta+\mu R)^2}{2\mu}\ln T,
\end{align*}
where (a) uses the definition of $\targ{v}{k}{*}$ and (b) is due to the regret bound of the FTL algorithm. We have:
\begin{align*}
    \sum_{t=1}^T \big(f_t(x^*)-f_t(\targ{x}{t}{k})\big)&\overset{\text{(c)}}\leq \sum_{t=1}^T\big(f_t(x^*+ \targ{x}{t}{k})-f_t(\targ{x}{t}{k})\big)\\
    &\overset{\text{(d)}}\leq \sum_{t=1}^T \langle x^*,\nabla f_t(\targ{x}{t}{k}) \rangle -\frac{\mu T}{2}\|x^*\|^2,
\end{align*}
where (c) and (d) use the monotonocity and $\mu$-strong DR-submodularity of $\{f_t\}_{t=1}^T$.
Combining the above inequalities, we have:
\begin{align*}
    \sum_{t=1}^T \big(f_t(\targ{x}{t}{k+1})-f_t(\targ{x}{t}{k})\big)&\geq \frac{1}{K}\sum_{t=1}^T \big(f_t(x^*)-f_t(\targ{x}{t}{k})\big)+\frac{\mu}{2K}\sum_{t=1}^T \|\targ{v}{t}{k}\|^2-\frac{LR^2T}{2K^2}-\frac{(\beta+\mu R)^2}{2\mu K}\ln T\\
    &\geq \frac{1}{K}\sum_{t=1}^T \big(f_t(x^*)-f_t(\targ{x}{t}{k})\big)-\frac{LR^2T}{2K^2}-\frac{(\beta+\mu R)^2}{2\mu K}\ln T.
\end{align*}
Equivalently, we can write:
\begin{align*}
        \sum_{t=1}^T \big(f_t(\targ{x}{t}{k+1})-f_t(x^*)\big)&\geq(1-\frac{1}{K})\sum_{t=1}^T \big(f_t(\targ{x}{t}{k})-f_t(x^*)\big)-\frac{LR^2T}{2K^2}-\frac{(\beta+\mu R)^2}{2\mu K}\ln T.
\end{align*}
Applying the inequality recursively for all $k\in[K]$, we obtain:
\begin{align*}
        &\sum_{t=1}^T \big(f_t(\underbrace{\targ{x}{t}{K+1}}_{=x_t})-f_t(x^*)\big)&\geq(1-\frac{1}{K})^K\sum_{t=1}^T \big(\underbrace{f_t(\targ{x}{t}{1})}_{=0}-f_t(x^*)\big)-\frac{LR^2T}{2K}-\frac{(\beta+\mu R)^2}{2\mu}\ln T.
\end{align*}
Rearranging the terms and using the inequality $(1-\frac{1}{K})^K\leq \frac{1}{e}$, we can write:
\begin{equation*}
        \sum_{t=1}^T \big((1-\frac{1}{e})f_t(x^*)-f_t(x_t)\big)\leq \frac{LR^2T}{2K}+\frac{(\beta+\mu R)^2}{2\mu}\ln T.
\end{equation*}
Therefore, if we set $K=\O(\frac{T}{\ln T})$, we obtain $\O(\ln T)$ regret bound.
\subsection*{Proof of Theorem 3}
For non-quadratic submodular functions, the Hessian of the utility functions are not fixed and therefore, we have to ensure the strong DR-submodularity of every block of size $W$ holds at every point $x\in \K$. We provide the analysis for the class of concave functions with negative dependence below (this analysis could be easily extended to more general submodular functions). For all $t\in[T]$, let $f_t$ be of the form of concave functions with negative dependence (introduced in the paper). In this case, $\nabla^2_{ii}f_t(x)=\frac{\partial^2 \targ{h}{i}{t}}{\partial x_i^2}(x)$ for all $i\in[n]$. Note that for all $i\in[n]$, we are assuming $\frac{1}{T}\sum_{t=1}^T\targ{h}{i}{t}$ is $\mu$-strongly concave while each individual $\targ{h}{i}{t}$ may be neither convex or concave. Assume that for all $i\in[n]$ and $\tau \in \{0,\dots,\lceil\frac{T}{W}\rceil-1\}$, the second derivative of the scalar function $\frac{1}{W}\sum_{t=\tau W+1}^{(\tau+1)W}\targ{h}{i}{t}$ is $H$-Lipschitz. Also, assume that for all $x\in\K$, $0\leq x_i \leq R_i$ holds. For all $i\in[n]$, consider the discretized set of points $\mathcal{Z}_i=\{2k\gamma~\forall k\in\{0,\dots,\frac{R_i}{2\gamma}\}\}$. Note that for all $x\in\K$ and $i\in[n]$, there exists $z_i\in \mathcal{Z}_i$ such that $|x_i-z_i|\leq \gamma$. If the desired property holds for these set of points in the domain $\K$, we can use the $H$-Lipschitzness assumption to conclude
\begin{equation*}
    \frac{1}{W}\sum_{t=\tau W+1}^{(\tau+1)W}(\targ{h}{i}{t})^{''}(x_i)\leq -(\mu-\epsilon-\gamma H).
\end{equation*}
Therefore, in order to take the union bound in this setting, we set $\frac{\delta}{T\sum_{i=1}^n\lceil \frac{R_i}{2\gamma} \rceil}=4{\rm exp}\big(-\frac{W\epsilon^2}{128\theta L^2}\big)$ which is equivalent to $W_0=\frac{128\theta L^2}{\epsilon^2}\ln(\frac{4T\sum_{i=1}^n R_i}{2\gamma\delta})$.
\subsection*{Proof of Theorem 4}
For Algorithm 2, we can write:
\begin{align*}
    f(\targ{x}{t+1}{k+1})&\overset{\text{(a)}}\geq f(\targ{x}{t+1}{k})+\frac{1}{K_t}\langle \targ{v}{t}{k},\nabla f(\targ{x}{t+1}{k})\rangle -\frac{L}{2K_t^2}\|\targ{v}{t}{k}\|^2\\
    &\geq f(\targ{x}{t+1}{k})+\frac{1}{K_t}\langle \targ{v}{t}{k},\targ{d}{t}{k}\rangle+\frac{1}{K_t}\langle \targ{v}{t}{k},\nabla f(\targ{x}{t+1}{k})-\targ{d}{t}{k}\rangle -\frac{LR^2}{2K_t^2}\\
    &\overset{\text{(b)}}\geq f(\targ{x}{t+1}{k})+\frac{1}{K_t}\langle x^*,\targ{d}{t}{k}\rangle+\frac{1}{K_t}\langle \targ{v}{t}{k},\nabla f(\targ{x}{t+1}{k})-\targ{d}{t}{k}\rangle -\frac{LR^2}{2K_t^2}\\
    &= f(\targ{x}{t+1}{k})+\frac{1}{K_t}\langle x^*,\nabla f(\targ{x}{t+1}{k})\rangle+\frac{1}{K_t}\langle \targ{v}{t}{k}-x^*,\nabla f(\targ{x}{t+1}{k})-\targ{d}{t}{k}\rangle -\frac{LR^2}{2K_t^2}\\    
    &\overset{\text{(c)}}\geq f(\targ{x}{t+1}{k})+\frac{1}{K_t}\langle (x^*-\targ{x}{t+1}{k})\vee 0,\nabla f(\targ{x}{t+1}{k})\rangle+\frac{1}{K_t}\langle \targ{v}{t}{k}-x^*,\nabla f(\targ{x}{t+1}{k})-\targ{d}{t}{k}\rangle -\frac{LR^2}{2K_t^2}\\ 
    &\overset{\text{(d)}}\geq f(\targ{x}{t+1}{k})+\frac{1}{K_t}f(x^*\vee \targ{x}{t+1}{k})-\frac{1}{K_t}f(\targ{x}{t+1}{k})+\frac{1}{K_t}\langle \targ{v}{t}{k}-x^*,\nabla f(\targ{x}{t+1}{k})-\targ{d}{t}{k}\rangle -\frac{LR^2}{2K_t^2}\\
    &\overset{\text{(e)}}\geq f(\targ{x}{t+1}{k})+\frac{1}{K_t}f(x^*)-\frac{1}{K_t}f(\targ{x}{t+1}{k})+\frac{1}{K_t}\langle \targ{v}{t}{k}-x^*,\nabla f(\targ{x}{t+1}{k})-\targ{d}{t}{k}\rangle -\frac{LR^2}{2K_t^2},
\end{align*}
where (a) uses $L$-smoothness of $f$, (b) is due to the update rule of the algorithm, (c) and (e) follow from monotonocity of $f$, and (d) exploits concavity of $f$ along non-negative directions. Defining $\targ{\epsilon}{t}{k}:=\targ{d}{t}{k}-\nabla f(\targ{x}{t+1}{k})$ and rearranging the terms in the above inequality, we have:
\begin{align*}
    f(x^*)-f(\targ{x}{t+1}{k+1})&\leq (1-\frac{1}{K_t})\big(f(x^*)-f(\targ{x}{t+1}{k})\big)+\frac{R}{K_t}\|\targ{\epsilon}{t}{k}\|+\frac{LR^2}{2K_t^2}.
\end{align*}
Applying the inequality recursively for all $k\in[K_t]$ and taking expectation of both sides, we have:
\begin{align*}
    f(x^*)-\E[f(\underbrace{\targ{x}{t+1}{K_t+1}}_{=x_{t+1}})]&\leq \underbrace{(1-\frac{1}{K_t})^{K_t}}_{\leq \frac{1}{e}}\big(f(x^*)-\E[f(\underbrace{\targ{x}{t+1}{1}}_{=0})]\big)+\frac{R}{K_t}\sum_{k=1}^{K_t}\E\|\targ{\epsilon}{t}{k}\|+\frac{LR^2}{2K_t}.
\end{align*}
Rearranging the terms, taking expectation of both sides and taking the sum over $t\in \{1,\dots,T-1\}$, we obtain:
\begin{equation}\label{eq:1}
    \E[(1-\frac{1}{e})\text{-SR}_T]\leq \sum_{t=1}^{T-1}\frac{LR^2}{2K_t}+\sum_{t=1}^{T-1}\frac{R}{K_t}\sum_{k=1}^{K_t}\E\|\targ{\epsilon}{t}{k}\|+\O(1).
\end{equation}
We can bound $\E\|\targ{\epsilon}{t}{k}\|$ as follows:
\begin{align*}
    \E\|\targ{\epsilon}{t}{k}\|&=\E\sqrt{\|\targ{\epsilon}{t}{k}\|^2}\\
    &\leq \sqrt{\E\|\targ{\epsilon}{t}{k}\|^2}\\
    &= \sqrt{\E\big[\big(\frac{1}{t}\sum_{\tau=1}^t\nabla f_{\tau}(x_{t+1}^{(k)})-\nabla f(\targ{x}{t+1}{k})\big)^T\big(\frac{1}{t}\sum_{\tau=1}^t\nabla f_{\tau}(x_{t+1}^{(k)})-\nabla f(\targ{x}{t+1}{k})\big)\big]}\\
    &= \sqrt{\frac{1}{t^2}\sum_{\tau=1}^t\E\|\nabla f_{\tau}(x_{t+1}^{(k)})-\nabla f(\targ{x}{t+1}{k})\|^2}\\
    &\leq \sqrt{\frac{1}{t^2}\sum_{\tau=1}^t \sigma^2}\\
    &=\frac{\sigma}{\sqrt{t}},
\end{align*}
where the first inequality is due to Jensen's inequality. Therefore, if we set $K_t=\sqrt{t}~\forall t\in[T]$, the expected regret bound of the algorithm is $\O((LR^2+R\sigma)\sqrt{T})$.\\
We can also obtain high probability regret bound for the algorithm. Considering that $f$ is $\beta$-Lipschitz, we have $\|\nabla f_t(\cdot)\|\leq \|\nabla f(\cdot)\|+\|\nabla f_t(\cdot)-\nabla f(\cdot)\|\leq \beta+\sigma$. Therefore, we can use Corollary 7 of \cite{jin2019short} to conclude that with probability at least $1-\frac{\delta}{T^{3/2}}$, the following holds:
\begin{equation*}
    \|\targ{\epsilon}{t}{k}\|\leq \O\big((\sigma+\beta)\sqrt{\frac{\ln(2T^{3/2}/\delta)}{t}}\big).
\end{equation*}
Taking the union bound, we can conclude that $(1-\frac{1}{e})\text{-SR}_T\leq \sum_{t=1}^{T-1}\frac{LR^2}{2K_t}+\O(\sigma\sqrt{T\ln (T^{3/2}/\delta)})$ holds with probability at least $1-\delta$. Thus, if we set $K_t=\sqrt{t}~\forall t\in[T]$, we obtain an $\tilde{\O}(\sqrt{T})$ regret bound, both in expectation and with high probability.
\subsection*{Proof of Theorem 5}
Using the $L$-smoothness of $f$ and the update rule of Algorithm 3, we have:
\begin{align*}
    f(x_{t+1})&\overset{\text{(a)}}\geq f(x_t)+\frac{1}{T}\langle v_t,\nabla f(x_t)\rangle -\frac{L}{2T^2}\|v_t\|^2\\
    &\geq f(x_t)+\frac{1}{T}\langle v_t,d_t\rangle + \frac{1}{T}\langle v_t,\nabla f(x_t)-d_t\rangle-\frac{LR^2}{2T^2}\\
    &\overset{\text{(b)}}\geq f(x_t)+\frac{1}{T}\langle x^*,d_t\rangle + \frac{1}{T}\langle v_t,\nabla f(x_t)-d_t\rangle-\frac{LR^2}{2T^2}\\
    &=f(x_t)+\frac{1}{T}\langle x^*,\nabla f(x_t)\rangle + \frac{1}{T}\langle v_t-x^*,\nabla f(x_t)-d_t\rangle-\frac{LR^2}{2T^2}\\
    &\overset{\text{(c)}}\geq f(x_t)+\frac{1}{T}\langle (x^*-x_t)\vee 0,\nabla f(x_t)\rangle+\frac{1}{T}\langle v_t-x^*,\nabla f(x_t)-d_t\rangle -\frac{LR^2}{2T^2}\\ 
    &\overset{\text{(d)}}\geq f(x_t)+\frac{1}{T}f(x^*\vee x_t)-\frac{1}{T}f(x_t)+\frac{1}{T}\langle v_t-x^*,\nabla f(x_t)-d_t\rangle -\frac{LR^2}{2T^2}\\
    &\overset{\text{(e)}}\geq f(x_t)+\frac{1}{T}f(x^*)-\frac{1}{T}f(x_t)+\frac{1}{T}\langle v_t-x^*,\nabla f(x_t)-d_t\rangle -\frac{LR^2}{2T^2},
\end{align*}
where (a) uses $L$-smoothness of $f$, (b) is due to the update rule of the algorithm, (c) and (e) follow from monotonocity of $f$, and (d) exploits concavity of $f$ along non-negative directions. Defining $\epsilon_t:=d_t-\nabla f(x_t)$ and rearranging the terms in the above inequality, we have:
\begin{align*}
    f(x^*)-f(x_{t+1})&\leq (1-\frac{1}{T})\big(f(x^*)-f(x_t)\big)+\frac{R}{T}\|\targ{\epsilon}{t}{k}\|+\frac{LR^2}{2T^2}.
\end{align*}
Applying the above inequality recursively, we have:
\begin{align}
    f(x^*)-f(x_{t+1})&\leq \underbrace{(1-\frac{1}{T})^{t}}_{\leq e^{-t/T}}\big(f(x^*)-\underbrace{f(x_1)}_{=0}\big)+\frac{R}{T}\sum_{s=1}^{t}\|\epsilon_s\|+\frac{LR^2}{2T}.\label{eq:2}
\end{align}
Using inequality \ref{eq:2} and $\sum_{t=1}^{T-1}e^{-t/T}\leq T(e^{-1/T}-\frac{1}{e})\leq T(1-\frac{1}{e})$, we have $(\frac{1}{e})\text{-SR}_T\leq \frac{LR^2}{2}+\frac{R}{T}\sum_{t=1}^{T-1}\sum_{s=1}^t\|\epsilon_{s}\|$. We can write:
\begin{align*}
    \epsilon_{t}&=d_{t}-\nabla f(x_t)\\
    &= (1-\rho_t)\epsilon_{t-1}+\rho_t\big(\nabla f_t(x_{t})-\nabla f(x_{t})\big)\\
    &+(1-\rho_t)\big(\nabla f_t(x_{t})-\nabla f_t(x_{t-1})-(\nabla f(x_{t})-\nabla f(x_{t-1}))\big).
\end{align*}
Applying the above equality recursively, we obtain:
\begin{align*}
    \epsilon_t&=\prod_{s=2}^t(1-\rho_s)\epsilon_1+\sum_{\tau=2}^t\prod_{s=\tau}^t(1-\rho_s)\big(\nabla f_{\tau}(x_{\tau})-\nabla f_{\tau}(x_{\tau-1})-(\nabla f(x_{\tau})-\nabla f(x_{\tau-1}))\big)\\
    &+\sum_{\tau=2}^t\rho_{\tau}\prod_{s=\tau+1}^t(1-\rho_s)\big(\nabla f_{\tau}(x_{\tau})-\nabla f(x_{\tau})\big).
\end{align*}
Let $\epsilon_t=\sum_{\tau=1}^t\zeta_{t,\tau}$, where $\zeta_{t,1}=\prod_{s=2}^t(1-\rho_s)\epsilon_1$ and $\zeta_{t,\tau}=\prod_{s=\tau}^t(1-\rho_s)\big(\nabla f_{\tau}(x_{\tau})-\nabla f_{\tau}(x_{\tau-1})-(\nabla f(x_{\tau})-\nabla f(x_{\tau-1}))\big)+\rho_{\tau}\prod_{s=\tau+1}^t(1-\rho_s)\big(\nabla f_{\tau}(x_{\tau})-\nabla f(x_{\tau})\big)$ for $\tau>1$. Let $\mathcal{F}_{\tau-1}$ be the $\sigma$-field generated by $\{f_s\}_{s=1}^{\tau-1}$. Clearly, $\E[\zeta_{t,1}]=0$. Also, for $\tau>1$, we have:
\begin{align*}
    \E[\zeta_{t,\tau}|\mathcal{F}_{\tau-1}]&=\rho_{\tau}\prod_{s=\tau+1}^t(1-\rho_s)\big(\nabla f(x_{\tau})-\nabla f(x_{\tau})\big)\\
    &+\prod_{s=\tau}^t(1-\rho_s)\big(\nabla f(x_{\tau})-\nabla f(x_{\tau-1})-(\nabla f(x_{\tau})-\nabla f(x_{\tau-1}))\big)\\
    &=0.
\end{align*}
Therefore, for all $t\in[T]$, $\{\zeta_{t,\tau}\}_{\tau=1}^t$ is a martingale difference sequence.\\
For any $\tau\in[t]$, we can write:
\begin{equation*}
    \prod_{s=\tau}^t(1-\rho_s)=\prod_{s=\tau}^t(1-\frac{1}{s+1})=\prod_{s=\tau}^t\frac{s}{s+1}=\frac{\tau}{t+1}.
\end{equation*}
Thus, we have $\|\zeta_{t,1}\|=\frac{2}{t+1}\|\nabla f_1(x_1)-\nabla f(x_1)\|\leq \frac{2\sigma}{t+1}$. For $\tau>1$, $\|\zeta_{t,\tau}\|$ could be bounded as follows:
\begin{align*}
    \|\zeta_{t,\tau}\|&\leq \prod_{s=\tau}^t(1-\rho_s)\big(\|\nabla f_{\tau}(x_{\tau})-\nabla f_{\tau}(x_{\tau-1})\|+\|\nabla f(x_{\tau})-\nabla f(x_{\tau-1})\|\big)\\
    &+\rho_{\tau}\prod_{s=\tau+1}^t(1-\rho_s)\|\nabla f_{\tau}(x_{\tau})-\nabla f(x_{\tau})\|\\
    &\leq \frac{2L\tau}{t+1}\|\underbrace{x_{\tau}-x_{\tau-1}}_{\frac{1}{T}v_t}\|+\frac{\sigma}{t+1}\\
    &\leq \frac{2LR\tau/T+\sigma}{t+1}\\
    &\leq \frac{2LR+\sigma}{t+1}.
\end{align*}
Using the concentration inequality for vector-valued martingales, we have:
\begin{align*}
    \mathbb{P}\big(\|\epsilon_t\|\geq \lambda_t\big)&\leq 4{\rm exp}\big(-\frac{\lambda_t^2}{(\frac{2\sigma}{t+1})^2+(t-1)(\frac{2LR+\sigma}{t+1})^2}\big)\\
    &\leq  4{\rm exp}\big(-\frac{\lambda_t^2(t+1)}{(2LR+2\sigma)^2}\big).
\end{align*}
Therefore, if we set $\lambda_t=\frac{(2LR+2\sigma)\sqrt{\ln (4T/\delta)}}{\sqrt{t+1}}$, for all $t\in[T]$, with probability at least $1-\delta$, the following holds:
\begin{equation*}
    \|\epsilon_t\|\leq \frac{(2LR+2\sigma)\sqrt{\ln (4T/\delta)}}{\sqrt{t+1}}.
\end{equation*}
Thus, with probability at least $1-\delta$, the regret bound is $\O(\sigma \sqrt{T\ln(T/\delta)})$.\\
We can also obtain the expected regret bound of the algorithm. We have:
\begin{align*}
    \E\|\epsilon_t\|&=\int_{\lambda=0}^{\infty}\mathbb{P}(\|\epsilon_t\|\geq \lambda)d\lambda\\
    &\leq \int_{\lambda=0}^{\infty}4{\rm exp}\big(-\frac{\lambda^2(t+1)}{(2LR+2\sigma)^2}\big)d\lambda\\
    &=\int_{x=0}^{\infty}4{\rm exp}(-x^2)\frac{2LR+2\sigma}{\sqrt{t+1}}dx\\
    &=\frac{4\sqrt{\pi}(LR+\sigma)}{\sqrt{t+1}}.
\end{align*}
Therefore, using inequality \ref{eq:2}, the expected regret bound is $\O(\sigma\sqrt{T})$.
\end{document}